\newtheorem{theorem}{Theorem}
\newtheorem{proposition}[theorem]{Proposition}
\newtheorem{corollary}[theorem]{Corollary}
\newtheorem{lemma}[theorem]{Lemma}
\DeclareMathOperator*{\argmin}{arg\,min}
\newcommand{\m}[1]{\ensuremath{\overline{#1}}}
\definecolor{red}{rgb}{0.8,0.2,0.2}
\definecolor{blue}{rgb}{0,0,0.5}
\definecolor{green}{rgb}{0,0.7,0}
\definecolor{violet}{rgb}{0.5,0.2,0.5}
\definecolor{orange}{rgb}{0.8,0.5,0.2}
\newcommand{\red}[1][red]{\textcolor{red}{{#1}}}
\newcommand{\blue}[1][blue]{{#1}}
\newcommand{\green}[1][green]{{#1}}
\newcommand{\xaxis}{$x$-axis\xspace}
\newcommand{\yaxis}{$y$-axis\xspace}
\newcommand{\auc}{\ensuremath{\mathit{AUC}}\xspace}
\newcommand{\auch}{\ensuremath{\mathit{AUCH}}\xspace}
\newcommand{\bs}{\ensuremath{\mathit{BS}}\xspace}
\newcommand{\sk}{z}
\newcommand{\Tcost}{T_c}
\newcommand{\Tsk}{T_\sk}
\newcommand{\Tcosto}{{T^o_c}}
\newcommand{\Tcostp}{{T^p_c}}
\newcommand{\Tcostq}{{T^q_c}}
\newcommand{\Tcostn}{{T^n_c}}
\newcommand{\Tsko}{{T^o_\sk}}
\newcommand{\Tskp}{{T^p_\sk}}
\newcommand{\Qcost}{Q_c}
\newcommand{\Qsk}{Q_\sk}
\newcommand{\wcost}{w_c}
\newcommand{\wsk}{w_\sk}
\newcommand{\Lcost}{L_c}
\newcommand{\Lsk}{L_\sk}
\newcommand{\Lsko}{L^o_\sk}
\newcommand{\LskoU}{L^o_{U(\sk)}}
\newcommand{\Lcostn}{L^n_c}
\newcommand{\LcostnU}{L^n_{U(c)}}
\newcommand{\LsknU}{L^n_{U(\sk)}}
\newcommand{\Lcostp}{L^p_c}
\newcommand{\LcostpU}{L^p_{U(c)}}
\newcommand{\CCsk}{CC_{\sk}}
\newcommand{\CCsko}{CC^o_{\sk}}
\newcommand{\CCcostn}{CC^n_{c}}
\newcommand{\rate}{r}  % Rate or proportion of positives
\newcommand{\rcurve}{ROC cost curve\xspace}  % Alternative names: `rate curve' or `rate-driven curve' or `AUC curve'.
\newcommand{\rcurves}{ROC cost curves\xspace}  % Alternative names: `rate curve' or `rate-driven curve' or `AUC curve'.
\newcommand{\briercurve}{Brier curve\xspace}  % Alternative names: `Brier curve' or `score-driven curve'.
\newcommand{\briercurves}{Brier curves\xspace}  % Alternative names: `Brier curve' or `score-driven curve'.
\title{ROC Curves in Cost Space}
\author{Peter A.~Flach \and Jos\'e Hern\'andez-Orallo \and C\`esar Ferri\footnote{\red[Order of authors to be determined.]} \\
\red[INSTITUTIONS...]
}
\begin{document}

%\maketitle

\begin{titlepage}

\begin{center}

% Upper part of the page
%\includegraphics[width=0.15\textwidth]{./logo}\\[1cm]    

\vspace*{1cm}

\newcommand{\HRule}{\rule{\linewidth}{0.5mm}}

% Title
\HRule \\[0.4cm]
{ \huge \bfseries Technical Note: Towards ROC Curves in Cost Space}\\[0.4cm]

\HRule \\[3cm]

\begin{minipage}{0.75\textwidth}
\begin{flushleft} 
Jos\'e Hern\'andez-Orallo\hfill(jorallo@dsic.upv.es)\\
Departament de Sistemes Inform\`atics i Computaci\'o\\
Universitat Polit\`ecnica de Val\`encia, Spain\\[12pt]

Peter Flach\hfill(Peter.Flach@bristol.ac.uk)\\
Intelligent Systems Laboratory\\
University of Bristol, United Kingdom\\[12pt]

C\`esar Ferri\hfill(cferri@dsic.upv.es)\\
Departament de Sistemes Inform\`atics i Computaci\'o\\
Universitat Polit\`ecnica de Val\`encia, Spain\\[12pt]

\end{flushleft}
\end{minipage}

\vspace{2cm}

{\large \today}

\vfill
% Bottom of the page
\end{center}

\begin{abstract}
\noindent
ROC curves and cost curves are two popular ways of visualising classifier performance, finding appropriate thresholds according to the operating condition, and deriving useful aggregated measures such as the area under the ROC curve (\auc) or the area under the optimal cost curve. 
In this note we present some new findings and connections between ROC space and cost space, by using the expected loss over a range of operating conditions. In particular, we show that ROC curves
can be transferred to cost space by means of a very natural way of understanding how thresholds should be chosen, by selecting the threshold such that the proportion of positive predictions equals the operating condition (either in the form of cost proportion or skew).
We call these new curves {\em \rcurves}, and we demonstrate that the expected loss as measured by the area under these curves is linearly related to \auc. This opens up a series of new possibilities and clarifies the notion of cost curve and its relation to ROC analysis.
In addition, we show that for a classifier that assigns the scores in an evenly-spaced way, these curves are equal to the \briercurves. As a result, this establishes the first clear connection between $\auc$ and the Brier score.
\\[12pt]
{\bf Keywords:} cost curves, ROC curves, Brier curves, classifier performance measures, cost-sensitive evaluation, operating condition, Brier score, Area Under the ROC Curve ($\auc$).

\end{abstract}

\end{titlepage}

\section{Introduction}

There are many graphical representations and tools for classifier evaluation, such as ROC curves \cite{SDM00,Fawcett06}, ROC isometrics \cite{Fla03}, cost curves \cite{DH00,drummond-and-Holte2006}, DET curves \cite{martin1997det}, lift charts \cite{piatetsky1999estimating}, calibration maps \cite{cohen2004properties}, among others. 
In this paper, we will focus on ROC curves and cost curves. These are often considered to be two sides of the same coin, where a point in ROC space corresponds to a line in cost space. However, this is only true up to a point, as a curve in ROC space has no corresponding representation in cost space. It is true that the {\em convex hull} of a ROC curve corresponds to the lower envelope of all the cost lines, but this is not the ROC curve. In fact, the area under this lower envelope has no clear connection with $\auc$. As a result, cost space cannot be used in the same way as ROC spaces, and we find some advantages in one representation over the other and vice versa.

One of the issues with this lack of full correspondence is that the definition of what a cost curve is has been rather vague in the literature. In some occasions, only the cost lines are formally defined \cite{drummond-and-Holte2006}, where the {\em curve} is just defined as the lower envelope of all these lines.
 However, this assumes that threshold choices are optimal, which is not generally the case.
This curve is what we call here `the optimal cost curve' (frequently referred to in the literature as `{\em the} cost curve').
It is worth mentioning that Drummond and Holte \cite{drummond-and-Holte2006} talk about `selection criteria' (instead of `threshold choice methods'), they distinguish between `performance-independent selection criteria' and `cost-minimizing selection criteria', and they show some curves using different `selection criteria'. However, they do not develop the ideas further and they do not use this to generalise the notion of cost curve.

In previous work, we have generalised and systematically developed the concept of threshold choice method. For instance, in \cite{ICML11CoherentAUC} we have explored a new instance-uniform threshold choice method while in \cite{ICML11Brier} we explore the probabilistic threshold method. %which leads to a connection with the BS.
In \cite{thresholdchoicepaper11} we analyse this in general, leading to a total of six threshold choice methods and its corresponding measures.

In this paper, we are interested in how all this can be plotted in cost space and, in particular, we analyse a new threshold choice method which assigns the threshold such that the proportion or rate of positive predictions equals the operating condition (cost proportion). This leads to a cost curve where all the segments have equal length in terms of its projection over the \xaxis. In other words, each segment covers a range of cost proportions of equal length. 

A first graphical analysis of this curve indicates that each segment corresponds to a point in ROC space, and its position with respect to the optimal cost curve gives virtually the same information as the ROC curve does. Consequently, we call this new curve {\em the \rcurve}. It can also be interpreted as a cost-based analysis of rankers. 
Further analysis of this curve shows that the area under the \rcurve is a linear function of $\auc$, so doubly justifying the name given to this curve, its interpretation and its applications.

The paper is organised as follows. Section \ref{notation} introduces some basic notation and definitions.
In Section \ref{optimal} we refer to the relation between the ROC convex hull and the optimal cost curve.
Section \ref{ROCcost} introduces one of the contributions in the paper by using a threshold choice method which leads to the \rcurves. It also explains how these curves can be plotted easily and what their interpretation is. 
Section \ref{area} shows that the area under this curve is a linear function of $\auc$, and demonstrates the correspondence for some typical cases (random classifier, perfect classifier, worst classifier).
Section \ref{Brier} analyses when a classifier chooses its scores in an evenly-spaced way. In this case,
it turns out that the area under the \rcurve is exactly the Brier score.
Section \ref{conclusion} closes the paper with some conclusions and future work.

\section{Notation and basic definitions}\label{notation}

In this section we introduce some basic notation and the notions of ROC curve, cost curves and the way expected loss is aggregated using a threshold-choice method.
Most of this section is reused from \cite{thresholdchoicepaper11}.

\subsection{Notation}

%We follow notation from \cite{hand2009measuring}. 

We denote by $U_S(x)$ the continuous uniform distribution of variable $x$ over an interval $S \subset {\mathds{R}}$. If this interval $S$ is $[0,1]$ then $S$ can be omitted. %The family of continuous distributions Beta is denoted by $B_{\alpha,\beta}$. The Beta distributions are always defined in the interval $[0,1]$. Note that the continuous distribution is a special case of the Beta family, i.e. $B_{1,1}=U$. Finally, we denote by ${\cal{U}}_S(x)$ the discrete distribution of $x$ over a finite discrete set or multiset $S$. We can drop $S$ and just write ${\cal{U}}(x)$ when the set or multiset $S$ of possible values for $x$ is clear from the context.

%Instances or examples are elements in $\langle X, Y\rangle$, where $X$ is a feature vector, or input, and $Y$ is a class set, or output. 
Examples (also called instances) are taken from an instance space.
\blue[The instance space is denoted $X$ and the output space $Y$. 
Elements in $X$ and $Y$ will be referred to as $x$ and $y$ respectively. For this paper we will assume binary classifiers, i.e., $Y = \{0, 1\}$. A crisp or categorical classifier is a function that maps examples to classes.] 
A probabilistic classifier is a function $m:X \rightarrow [0,1]$ that maps examples to estimates $\hat{p}(1|x)$ of the probability of example $x$ to be of class $1$.
\blue[  
A scoring classifier is a function $m:X \rightarrow \Re$ that maps examples to real numbers on an unspecified scale, such that scores are monotonically related to $\hat{p}(1|x)$. 
]
%In other words, scores are assumed to be higher for examples in class $1$.
%\red[ 
%This notation is slightly non-standard in that he uses 0 for the positive class and %1 for the negative class, but scores increase with $\hat{p}(1|x)$. That is, a %ranking on decreasing score proceeds from strongest negative prediction to %strongest positive prediction.
%]
%Note that the notion of score is valid for cases where the classifier outputs probabilities or likelihoods. 
\blue[In order to make predictions in the $Y$ domain, a probabilistic or scoring classifier can be converted to a crisp classifier by fixing a decision threshold $t$ on the scores. Given a predicted score $s=m(x)$, the instance $x$ is classified in class $1$ if $s > t$, and in class $0$ otherwise. 
]

\blue[For a given, unspecified classifier and population from which data are drawn, we denote the score density for class $k$ by $f_k$ and the cumulative distribution function by $F_k$. Thus, 
$F_{0}(t) = \int_{-\infty}^{t} f_{0}(s) ds = P(s\leq t|0)$ 
is the proportion of class 0 points correctly classified if the decision threshold is $t$, which is the sensitivity or true positive rate at $t$. Similarly, 
$F_{1}(t) = \int_{-\infty}^{t} f_{1}(s) ds = P(s\leq t|1)$ 
is the proportion of class 1 points incorrectly classified as 0 or the false positive rate at threshold $t$; $1-F_{1}(t)$ is the true negative rate or specificity.%] 
\footnote{We use 0 for the positive class and 1 for the negative class, but scores increase with $\hat{p}(1|x)$. That is, a ranking from strongest positive prediction to strongest negative prediction has non-decreasing scores. This is the same convention as used by, e.g., \cite{hand2009measuring}. }
]           

\blue[
Given a data set $D \subset \langle X, Y\rangle$ of size $n = |D|$, we denote by $D_k$ the subset of examples in class $k \in \{0,1\}$,
% i.e., $D_k = \{\langle x,y \rangle \in D : y = k\}$ 
and set $n_k = |D_k|$ and $\pi_k = n_k / n$. We will use the term \emph{class proportion} for $\pi_0$ (other terms such as `class ratio' or `class prior' have been used in the literature).]
The average score of class $k$ is
$\m{s}_k = {1 \over n_k} \sum_{\langle x, y \rangle \in {D_k}} m(x)$.
\blue[Given any strict order for a data set of $n$ examples we will use the index $i$ on that order to refer to the $i$-th example. Thus, $s_i$ denotes the score of the $i$-th example and $y_i$ its true class.]  
We use $I$ to denote the set of indices, i.e. $I=1..n$.
\blue[Given a data set and a classifier, we can define empirical score distributions for which we will use the same symbols as the population functions. We then have $f_k(s)={1 \over n_{k}}|\{\langle x, y \rangle \in {D_k} | m(x)=s \}|$ 
which is non-zero only in $n'_{k}$ points, where $n'_{k} \leq n_{k}$ is the number of unique scores assigned to instances in $D_{k}$ (when there are no ties, we have $n_k' = n_k$). 
Furthermore, the cumulative distribution functions 
and $F_{k}(t) = \sum_{s\leq t} f_{k}(s)$  
are piecewise constant with $n'_{k}+1$ segments].

$F_0$ is called sensitivity and $F_1$ is called specificity. 
The meaning of $F_0(t)$ can be seen as the proportion of examples of class $0$ which are correctly classified if the threshold is set at $t$.
Conversely, the meaning of $1-F_1(t)$ can be seen as the proportion of examples of class $1$ which are correctly classified if the threshold is set at $t$.  
%Consequently, we can easily define partial class accuracies as $Acc_0(t) = F_0(t)$ and $Acc_1(t) = 1-F_1(t)$. From here, (micro-average) accuracy is just defined as $Acc(t) = \pi_0 Acc_0(t) + \pi_1 Acc_1(t)$ and macro-average accuracy ${\macro}Acc(t) = (Acc_0(t) + Acc_1(t))/2$.

%We then have that the proportion of correctly classified examples is (micro-average) accuracy $Acc(t) = \pi_0 F_0(t) + \pi_1 (1-F_1(t))$; macro-average accuracy is defined as the unweighted mean of true positive rate and true negative rate, $MAcc(t) = {1 \over 2}(F_0(t) + 1-F_1(t))$.
%The average score of class $k$, i.e., the mean of $f_k(s)$, is denoted $\bar{s}_k$.
% = (\sum_{\langle x, y \rangle \in {D_k}} m(x)) / n_k$

\subsection{Operating conditions and overall loss}

When a classification model is applied, the conditions or context might be different to those used during its training might. In fact, a classifier can be used in several contexts, with different results. A context can imply different class proportions, different cost over examples (either for the attributes, for the class or any other kind of cost), or some other details about the effects that the application of a model might entail and the severity of its errors.
In practice, \blue[an operating condition or deployment context is usually defined by ] a misclassification cost function and a class distribution. 
\blue[Clearly, there is a difference between operating when the cost of misclassifying $0$ into $1$ is equal to the cost of misclassifying $1$ into $0$ and doing so when the former is ten times the latter. Similarly, operating when classes are balanced is different from when there is an overwhelming majority of instances of one class.
]%

\blue[One general approach to cost-sensitive learning assumes that the cost does not depend on the example but only on its class. In this way, misclassification costs are usually simplified by means of cost matrices, where we can express that some misclassification costs are higher than others \cite{Elk01}. Typically, the costs of correct classifications are assumed to be 0.]%
\footnote{Not doing so, or just considering one of the correct classifications to have 0 cost will lead to results which are different to the simplified setting by a constant term or factor, as happens with the model for cost-loss ratio used by Murphy in \cite{murphy1966note}.}
\blue[This means that for binary classifiers we can describe the cost matrix by two values $c_k \geq 0$, representing the misclassification cost of an example of class $k$. Additionally, we can normalise the costs by setting $b = c_0 + c_1$ and $c = c_0 / b$; 
%Clearly, $c$ only depends on the relative proportion of the costs: $c = \frac{ c_0}  {c_0 + c_1}$, and 
we will refer to $c$ as the \emph{cost proportion}.] 
Since this can also be expressed as $c=(1 + c_1/c_0)^{-1}$, it is often called `cost ratio' even though, technically, it is a proportion ranging between $0$ and $1$. 
We can see the dependency between $b$ and $c$\footnote{Hand \cite[p115]{hand2009measuring}
assumes  $b$ and $c$ to be independent, and hence considers $b$ not necessarily a constant. However, in the end, he also assumes that the result is only affected by a constant factor.}, which leaves just one degree of freedom, and we can set one of them constant.
Consequently, choosing $b$ constant we see that it \blue[ only affects the magnitude of the costs but is independent of the classifier. We set $b=2$ so that loss is commensurate with error rate (which just assumes $c_{0}=c_{1}=1$). %Hence $c = c_0/2$ and $1-c = c_1/2$.
]

\blue[The loss which is produced at a decision threshold $t$ and a cost proportion $c$ is then given by the formula:]
\begin{align}\label{eqQcost}
\Qcost(t; c) & \triangleq c_{0} \pi_0 (1 -F_0(t)) + c_{1} \pi_1 F_1(t) \\
 & = 2\{c \pi_0 (1 -F_0(t)) + (1-c) \pi_1 F_1(t)\} \nonumber
\end{align}
%
%The class distribution can be represented by one of the class priors, say $\pi_0$, which obviously ranges from $0$ to $1$ as well. It is easy to see that varying $c$ has the same effect as varying $\pi_{0}$.
%However, 
\blue[We often are interested in analysing the influence of class proportion and cost proportion at the same time.
Since the relevance of $c_0$ increases with $\pi_{0}$, an appropriate way to consider both at the same time is by the definition of \emph{skew}, which is a normalisation of their product:]
\begin{align}\label{eqSkew}
\sk & \triangleq \frac{c_0\pi_0}{c_0\pi_0 + c_1\pi_1} = \frac{c\pi_0}{c\pi_0 + (1-c)(1-\pi_0)}
\end{align}
It follows that $c = \frac{z\pi_1}{z\pi_1 + (1-z)(1-\pi_1)}$.
\blue[From Eq.~(\ref{eqQcost}) we obtain]
\begin{align}\label{eqQsk}
\frac{\Qcost(t; c)}{c_0\pi_0 + c_1\pi_1} = \sk (1 -F_0(t)) + (1-\sk) F_1(t) & \triangleq \Qsk(t; \sk)
\end{align}
\blue[This gives an expression for loss at a threshold $t$ and a skew $\sk$. 
We will assume that the operating condition is either defined by the cost proportion (using a fixed class distribution) or by the skew.]
\green[We then have the following simple but useful result]

\begin{lemma}\label{lemma-balance}
\green[If $\pi_0 = \pi_1$ then $z=c$ and $\Qsk(t; \sk) = {2 \over b} \Qcost(t; c)$. 
%With $b=2$, we have that $\Qsk(t; \sk) = \Qcost(t; c)$ if we assume a balanced data set, i.e., $\pi_0 = \pi_1 = 0.5$.
]
\end{lemma}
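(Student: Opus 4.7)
The plan is to observe that this lemma follows by direct substitution into the definitions already established. Since $\pi_0 + \pi_1 = 1$, the hypothesis $\pi_0 = \pi_1$ forces $\pi_0 = \pi_1 = 1/2$, so both statements reduce to simple arithmetic on Eq.~(\ref{eqSkew}) and Eq.~(\ref{eqQsk}).

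First I would prove $z = c$. Plugging $\pi_0 = 1/2$ into Eq.~(\ref{eqSkew}) gives
\[
z = \frac{c \cdot \tfrac{1}{2}}{c \cdot \tfrac{1}{2} + (1-c) \cdot \tfrac{1}{2}} = \frac{c}{c + (1-c)} = c,
\]
which is just the fact that skew coincides with cost proportion exactly when classes are balanced.

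Next I would establish the loss identity. Eq.~(\ref{eqQsk}) rearranges to
\[
\Qcost(t; c) = (c_0 \pi_0 + c_1 \pi_1)\,\Qsk(t; \sk).
\]
With $\pi_0 = \pi_1 = 1/2$ and $b = c_0 + c_1$, the normalising factor becomes $c_0 \pi_0 + c_1 \pi_1 = (c_0+c_1)/2 = b/2$. Combined with $z = c$ from the previous step, this yields $\Qcost(t; c) = (b/2)\,\Qsk(t; c)$, i.e.\ $\Qsk(t; \sk) = (2/b)\,\Qcost(t; c)$ as claimed.

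There is no real obstacle here; both claims are immediate consequences of the definitions once $\pi_0 = \pi_1 = 1/2$ is substituted. The lemma is stated separately because it explains the operational meaning of the $2/b$ factor appearing in Eq.~(\ref{eqQsk}): under the balanced-class convention $b=2$ adopted earlier, skew-based and cost-based loss coincide numerically whenever classes are balanced, which will let later results be stated interchangeably in either formulation.
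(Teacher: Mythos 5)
Your proof is correct and follows essentially the same route as the paper's: both observe that balanced classes give $c_0\pi_0 + c_1\pi_1 = b/2$ and then read off the two claims directly from Eq.~(\ref{eqSkew}) and Eq.~(\ref{eqQsk}). You simply spell out the substitutions that the paper's one-line proof leaves implicit.
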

\vspace{-0.5cm}
\begin{proof}
\green[If classes are balanced we have $c_0\pi_0 + c_1\pi_1 = b/2$, and the result follows from Eq.~(\ref{eqSkew}) and Eq.~(\ref{eqQsk}).] 
\end{proof}

\green[This further justifies taking $b=2$, which means that $\Qsk$ and $\Qcost$ are expressed on the same 0-1 scale, and, as said above, are also commensurate with error rate which assumes $c_{0}=c_{1}=1$. 
The upshot of Lemma~\ref{lemma-balance} is that we can transfer any expression for loss in terms of cost proportion to an equivalent expression in terms of skew by just setting $\pi_0 = \pi_1=1/2$ and $z=c$. ]

%\begin{eqnarray}\label{eqTsko}
%\Qsk(t; \sk) & \triangleq &  {1 \over 2} \{ \sk (1 -F_0(t)) + (1-\sk) F_1(t)\} \\
%%& = & { {c \pi_0 (1 -F_0(t)) + (1-c)\pi_1 F_1(t) } \over {2 \{ c\pi_0 + (1-c)\pi_1 \}} }  \\
%& = &  { \Qcost(t; c)  \over {2 \{ c\pi_0 + (1-c)\pi_1 \}}} \nonumber
%\end{eqnarray}
%%
%Note that the denominator is the same for any classifier, so it can just be seen as a normalisation term. It is easy to see that if the data set is balanced (i.e., $\pi_0 = \pi_1$) then $\Qcost(t;\cdot) = \Qsk(t;\cdot)$ since the denominator reduces to $2 \{ c0.5 + (1-c)0.5 \} = (c+1-c) = 1$. In fact, when the data set is balanced, $c=z$.

%Note that the denominator is the same for any classifier, so it can just be seen as a normalisation term. It is easy to see that if the data set is balanced (i.e. $\pi_0 = \pi_1$) then $\Qcost(t;\cdot) = \Qsk(t;\cdot)$ since the denominator reduces to $2 \{ c0.5 + (1-c)0.5 \} = (c+1-c) = 1$. In fact, when the data set is balanced, $c=z$.

In many real problems, when we have to evaluate or compare classifiers, we do not know the cost proportion or skew that will apply during application time. One general approach is to evaluate the classifier on a range of possible operating points. In order to do this, we have to set a weight or distribution on cost proportions or skews. In this paper, we will consider the continuous uniform distribution $U$.

A key issue when applying a classifier to several operating conditions is how the threshold is chosen in each of them. If we work with a crisp classifier, this question vanishes, since the threshold is already settled. However, in the general case when we work with a soft probabilistic classifier, we have to decide how to establish the threshold. The crucial idea explored in this paper is the notion of {\em threshold choice method}, a function $T(c)$ or $T(\sk)$, which converts an operating condition (cost proportion or skew) into an appropriate threshold for the classifier. There are several reasonable options for the function $T$. We can set a fixed threshold for all operating conditions, we can set the threshold by looking at the ROC curve (or its convex hull) and using the cost proportion or the skew to intersect the ROC curve (as ROC analysis does), we can set a threshold looking at the estimated scores, especially when they represent probabilities, or we can set a threshold independently from the rank or the scores. The way in which we set the threshold may dramatically affect performance. But, not less importantly, the performance measure used for evaluation must be in accordance with the threshold choice method.

From this interpretation, Adams and Hand \cite{AdamsHand1999} suggest to set a distribution over the set of possible operating points and integrate over them. In this way, we can define the overall or average expected loss in a range of situations as follows:
\begin{equation}\label{eqLcost}
\Lcost \triangleq \int^1_0{} \Qcost(\Tcost(c); c) \wcost(c) dc 
\end{equation}
where $\Qcost(t)$ is the expected cost for threshold $t$ as seen above, $\Tcost$ is a threshold choice method, which maps cost proportions to thresholds, and $\wcost(c)$ is a distribution for costs in $[0,1]$. Clearly we see that any performance measure which attempts to measure average expected cost in a wide range of operating condition depends on two things. First, the distribution $\wcost(c)$ that we use to weight the range of conditions. Second, the threshold choice method $T_c$. 
Additionally, we can define this overall or average expected cost to be independent of the class priors, so defining a similar construction for skews instead of costs:

\begin{equation}\label{eqLsk}
\Lsk \triangleq \int^1_0{} \Qsk(\Tsk(\sk); \sk) \wsk(\sk) d\sk
\end{equation}

If we draw $\Qcost$ or $\Qsk$ over $c$ and $\sk$ respectively, we get a plot space known as cost plots or curves, as we will illustrate below. Cost curves are also known as risk curves (see, e.g. \cite{reid2011information}, where the plot can also be shown in terms of {\em priors}, i.e. class proportions).

So a cost curve as a function of $\sk$ in our notation is simply:
\begin{equation}\label{eq:CCsk}
\CCsk(z) \triangleq \Qsk(T(\sk); \sk)
\end{equation}

\noindent and similarly for cost proportions. Note that it is the threshold choice method $T$ which can draw a different curve for the same classifier.

\subsection{Some common plots and measures}

In what follows, we introduce some common evaluation measures: the Brier Score, the ROC space and the Area Under the ROC curve ($\auc$). In the following section we also introduce the convex hull and the optimal cost curves.
% Before giving their definitions, we need to mention that Hand's notation is slightly non-standard in that he uses 0 for the positive class and 1 for the negative class, but scores are proportional (i.e. increasingly monotonic) to $\hat{p}(1|x)$. That is, scores \emph{decrease} along a ranking from most positive to most negative. ROC curves plot $F_0(s)$ on the \yaxis\ against $F_1(s)$ on the \xaxis. 

\blue[
The Brier score is a well-known evaluation measure for probabilistic classifiers. It is an alternative name for the Mean Squared Error or MSE loss \cite{brier1950verification}, especially for binary classification. 
%The term BS is only used when we have two classes. 
$\bs(m,D)$ is the Brier score of classifier $m$ with data $D$; we will usually omit $m$ and $D$ when clear from the context. We define $\bs_k(m,D) = \bs(m,D_k)$. \bs is defined as follows:
]
\begin{equation}\label{eqBS}
\bs \triangleq    {1 \over n} \sum_{i=1}^{n}(s_i - y_i)^2 = \pi_0 \bs_0 + \pi_1 \bs_1
\end{equation}
\blue[where $s_i$ is the score predicted for example $i$ and $y_i$ is the true class for example $i$. 
%Clearly, $\bs = \pi_0 \bs_0 + \pi_1 \bs_1$.
The corresponding population quantities are 
$\bs_{0} = \int_{0}^{1} s^{2}f_{0}(s) ds$ and 
$\bs_{1} = \int_{0}^{1} (1-s)^{2}f_{1}(s) ds$.  
]

%From here, we can define a prior-independent version of the Brier score (or a macro-average Brier score) as follows:           
%
%
%\begin{equation}\label{eqMBS}
%\macro\bs \triangleq   \frac{ \bs_0 +  \bs_1}{2}
%\end{equation}
%

\blue[The ROC curve \cite{SDM00,Fawcett06} is defined as a plot of $F_1(t)$ (i.e., false positive rate at decision threshold $t$) on the \xaxis against $F_0(t)$ (true positive rate at $t$) on the \yaxis, with both quantities monotonically non-decreasing with increasing $t$ (remember that scores increase with $\hat{p}(1|x)$ and 1 stands for the negative class).]
Figure \ref{fig:example1} (Leftmost: dash lines) shows a ROC curve for a classifier  with 4 examples of class 1 and 11 examples of class 0. Because of ties, there are 11 distinct scores and hence 11 bins/segements in the ROC curve.

From a ROC curve, we can derive the Area Under the ROC curve (\auc) as:
\begin{eqnarray}
\auc & \triangleq & \int_{0}^{1} F_0(s) d F_1(s) = \int_{-\infty}^{+\infty} F_0(s) f_1(s) ds = \int_{-\infty}^{+\infty} \int_{-\infty}^{s} f_0(t) f_1(s) dt ds  \label{eq:AUC}\\
     & = & \int_{0}^{1} (1-F_1(s)) d F_0(s) = \int_{-\infty}^{+\infty} (1-F_1(s)) f_0(s) ds = \int_{-\infty}^{+\infty} \int_{s}^{+\infty} f_1(t) f_0(s) dt ds \nonumber
\end{eqnarray}
\blue[When dealing with empirical distributions the integral is replaced by a sum. ]

\section{The optimal cost curve}\label{optimal}

Given a scoring (or soft) classifier, one approach for choosing a classification threshold is to consider that (1) we are having complete information about the operating condition (class proportions and costs) and (2) we are able to use that information to choose the threshold that will minimise the cost using the current classifier.
ROC analysis is precisely based on these two points and, as we have seen, using the skew and the convex hull, we can calculate the threshold which gives the smallest loss (for the training set).

This threshold choice method, denoted by $\Tcosto$ is:
\begin{align}\label{eqTcosto}
\Tcosto(c) & \triangleq   \argmin_{t}\ \{\Qcost(t; c)\}\nonumber  \\
& =  \argmin_{t}\ 2\{c\pi_0(1-F_0(t)) + (1-c)\pi_1 F_1(t)\}
\end{align}
\blue[
which matches the {optimal} threshold for a given skew $\sk$: 
]
%(remember that $\sk = c\pi_0 / (c\pi_0 + (1-c)\pi_1)$):
%
\begin{align} \label{eqTsko}
\Tsko(\sk) & \triangleq \argmin_{t}\ \{\Qsk(t; \sk)\} %\nonumber \\
%& = &  \argmin_{t} {1 \over 2} \left\{\sk(1-F_0(t)) + (1-\sk) F_1(t)\right\} \\
           %& =          & \argmin_{t}\ \left\{ {\Qcost(t; c)  \over {c\pi_0 + (1-c)\pi_1}}\right\} 
           = \Tcosto(c)
\end{align}
This threshold gives the convex hull in the ROC space.
\blue[The convex hull of a ROC curve (ROCCH) is a construction over the ROC curve in such a way that all the points on the ROCCH have minimum loss for some choice of $c$ or $\sk$. 
%Before properly defining what the ROCCH is, we have to define what we mean by minimum loss. In order to do that, we need to introduce the 
This means that we restrict attention to the {\em optimal} threshold for a given cost proportion $c$.]
Note that the $\argmin$ will typically give a range (interval) of values which give the same optimal value.
\blue[The convex hull is defined by the points $\{F_1(t), F_0(t)\}$ where $t = \Tcosto(c)$ for some $c$.] Then, in order to make a hull, all the remaining points are linearly interpolated (pairwise). 
All this is shown in Figure \ref{fig:example1} (leftmost).
\blue[The Area Under the ROCCH (denoted by \auch) can be computed in a similar way as the \auc with modified versions of $f_k$ and $F_k$. Obviously, $\auch \ge \auc$, with equality implying the ROC curve is convex.]

\blue[
A cost plot as defined by \cite{drummond-and-Holte2006} has $\Qsk(t; \sk)$ on the \yaxis against skew $\sk$ on the \xaxis (Drummond and Holte use the term `probability cost' rather than skew). Since 
$\Qsk(t; \sk) = \sk (1 -F_0(t)) + (1-\sk) F_1(t)$, 
cost lines for a given decision threshold $t$ are straight lines $\Qsk = a_0 + a_1\sk$ with intercept $a_0=F_{1}(t)$ and slope $a_1=1-F_{0}(t)-F_{1}(t)$.
]
\blue[A cost line visualises how cost at that threshold changes between $F_{1}(t)$ for $z=0$ and $1-F_{0}(t)$ for $z=1$.]

From all the set of cost lines, we can choose line segments and by piecewise connecting them we have a `hybrid cost curve' \cite{drummond-and-Holte2006}. One way of choosing these segments is by considering the optimal threshold. Hence, the optimal or minimum cost curve \blue[is then the lower envelope of all the cost lines, obtained by only considering the optimal threshold (the lowest cost line) for each skew. 
The cost curve for this optimal choice is just given by instantiating equation (\ref{eq:CCsk}) with the optimal threshold choice method. Namely, for skews, we would have:
\begin{equation}\label{eq:CCsko}
\CCsko(z) \triangleq \Qsk(\Tsko(\sk); \sk)
\end{equation}
]
%
%
%\begin{example}\label{ex:example1}

%
\begin{figure*}
\centering
\includegraphics[width=0.325\textwidth]{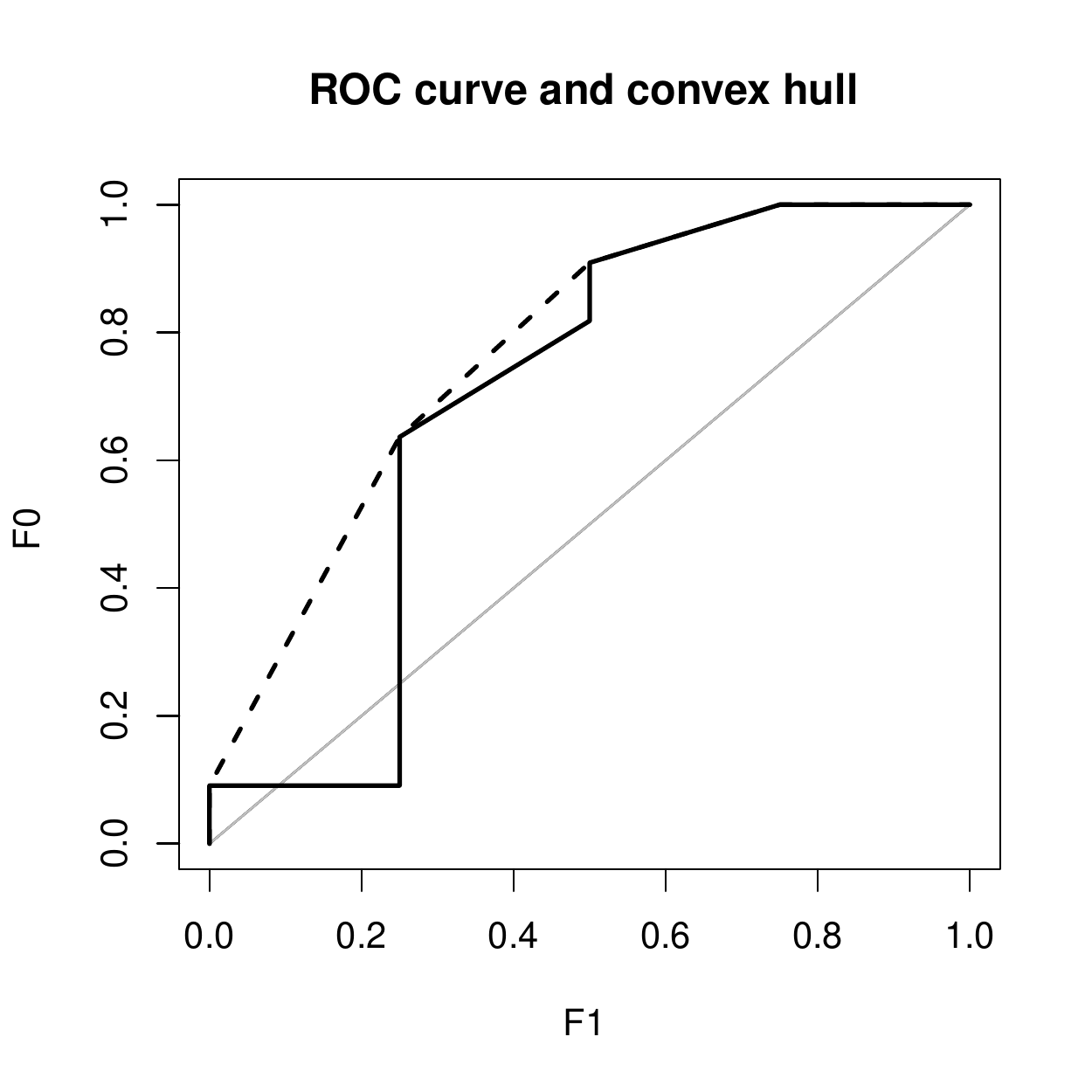}
\includegraphics[width=0.325\textwidth]{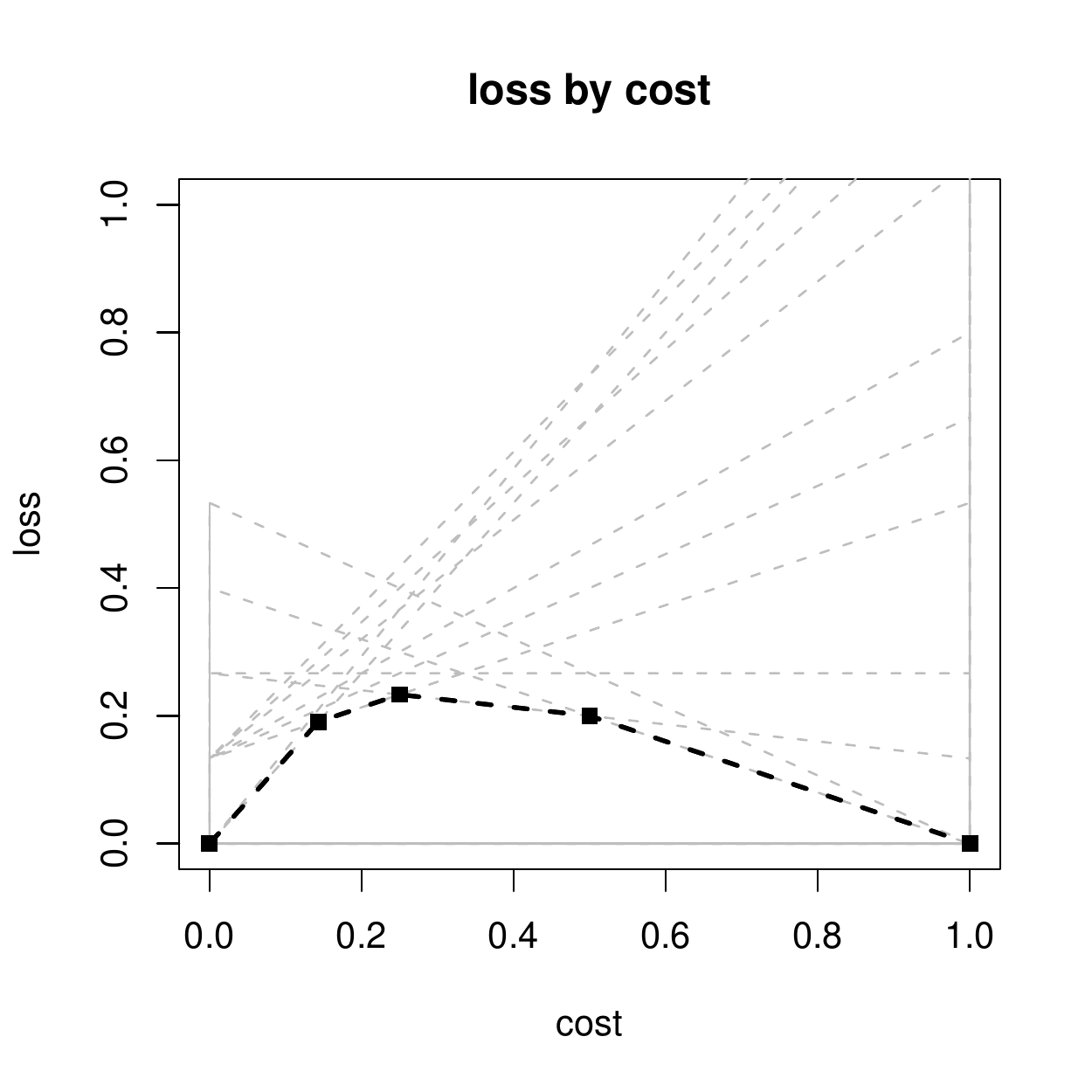}
\includegraphics[width=0.325\textwidth]{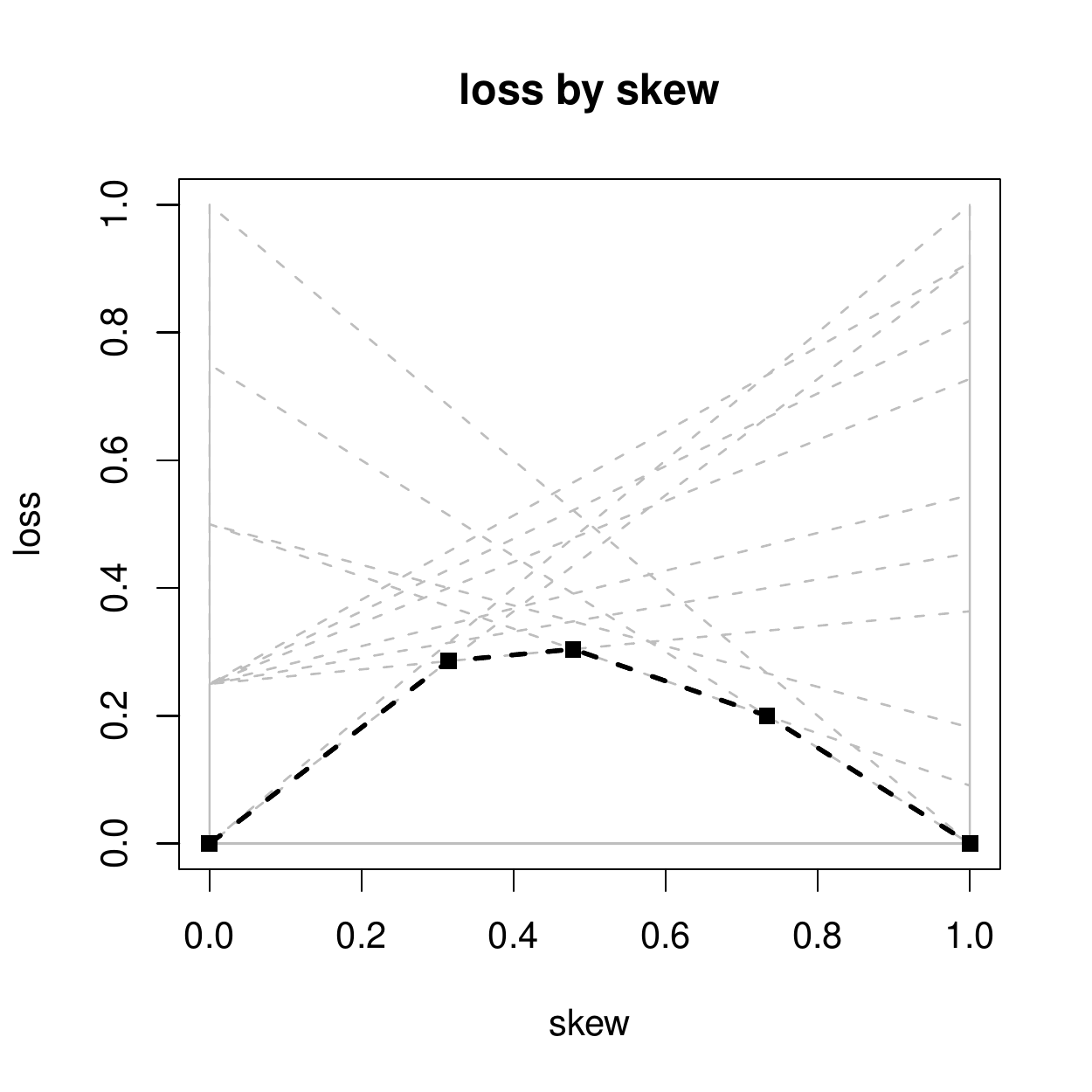}
%\vspace{-0.4cm}
\caption{\blue[Several graphical representations for the classifier] 
 with probability estimates (0.95, 0.90, 0.90, 0.85, 0.70, 0.70, 0.70, 0.55, 0.45, 0.20, 0.20, 0.18, 0.16, 0.15, 0.05)
%(0.05 0.15 0.16 0.18 0.20 0.20 0.45 0.55 0.70 0.70 0.70 0.85 0.90 0.90 0.95)
 and classes (1, 0, 1, 0, 0, 1, 0, 0, 0, 0, 0, 0, 0, 1, 0)
%(0 1 0 0 0 0 0 0 0 1 0 0 1 0 1). 
\blue[Left: ROC curve] (solid) \blue[and convex hull] (dashed). Middle: cost lines and optimal cost curve against cost proportions. Right: cost lines and optimal cost curve against skews.}
\label{fig:example1}
%\vspace{-0.4cm}
\end{figure*}

Following the classifier and the ROC curve shown in Figure~\ref{fig:example1} (leftmost), we also show the optimal cost curve (rightmost) for that classifier. \blue[We observe 7 segments in the original ROC curve on the left, and 5 segments in its convex hull. We see that these 5 segments correspond to the 5 points in the optimal cost curve on the right. The optimal cost curve is `constructed' as the lower envelope of the 12 cost lines (one more than the number of distinct scores).

The middle plot in Figure~\ref{fig:example1} is an alternative cost plot with cost proportion rather than skew on the \xaxis. That is, here the cost lines are straight lines $\Qcost = a'_0 + a'_1c$ with intercept $a'_0=2\pi_{1}F_{1}(t)$ and slope $a'_1=2\pi_{0}(1-F_{0}(t))-2\pi_{1}F_{1}(t)$. We can clearly observe the class imbalance. 
%\end{example}
]

%One detail we have made explicit is the difference between the middle plot and the bottom plot in Figure~\ref{fig:example1}. Cost curves are originally defined for skews 
%%(what Drummond and Holte called probability costs) 
%as shown in the bottom plot, but we can do the same just using cost proportions (middle plot). Since in this case, the classifier is highly imbalanced (4 examples of class 1 vs. 11 examples of class 0), we see an important warping between these two plots. Nonetheless, apart from the scaling, both plots and curves show virtually the same information.

For the classifier shown in Figure~\ref{fig:example1}, if we are given an extreme skew= 0.8, we know
that any threshold between 0.90 and 0.95 will be optimal, since it will classify example 15 as negative (1) and the rest as positive (0). This cutpoint (e.g. $t= 0.92 \in [0.90,0.95]$) gives $F_0=11/11$ and $F_1=3/4$, and minimises the loss for this skew, as given by Eq.\ (\ref{eqQsk}), i.e. $\Qsk(0.92, 0.8) = 0.8*(1-11/11) + (1-0.8)*(3/4) = 0.15$. Another cutpoint, e.g. $t= 0.85$, gives $F_0=10/11$ and $F_1=2/4$, with a higher $\Qsk(0.85, 0.8) = 0.8*(1-10/11) + (1-0.8)*(2/4) = 0.17$.

We may be interested in calculating the area under this optimal cost curve. If we use skews, we can derive:
\begin{equation}\label{eqLsko}
\Lsko \triangleq \int^1_0{} \Qsk(\Tsko(\sk); \sk) \wsk(\sk) d\sk
\end{equation}
But this equation is {\em exactly} the TEC (from `Total Expected Cost') given by Drummond and Holte (\cite{drummond-and-Holte2006} page 106, bottom). Drummond and Holte use the term `probability times cost' for skew (or simply, and somewhat misleadingly, `probability cost'). The distribution of probability costs is denoted by $prob(x)$ ($\wsk(\sk)$ in our notation). For $prob(x)$, Drummond and Holte choose the uniform distribution, i.e.:
\begin{equation}\label{eqLskoU}
\LskoU \triangleq \int^1_0{} \Qsk(\Tsko(\sk); \sk) U(\sk) d\sk 
\end{equation}
This expression is just the area under the cost curve. In Drummond and Holte's words: ``The area under a cost curve is the expected cost of the classifier assuming all possible probability cost values are equally likely, i.e. that $prob(x)$ is the uniform distribution." ($prob(x)$ is $\wsk(\sk)$ in our notation).

The problem of all this is that we are not always given all the information about the operating condition. In fact, even having that information, there are perfect techniques (namely ROC analysis) to get the optimal threshold for a data set (e.g. the training or validation data set), but this does not ensure that these choices are going to be optimal for a test set.
Consequently, evaluating classifiers in this way is a strong assumption. Additionally, how close the estimated optimal threshold is to the actual optimal threshold may depend on the classifier as well. One option is to consider confidence bands, but another option is just to drop this assumption.

\section{The \rcurve}\label{ROCcost}

The easiest way to choose the threshold is to set it  {\em independently} from the classifier and also from the operating condition. This mechanism can set the threshold in an absolute or a relative way. The absolute way, as explored in \cite{thresholdchoicepaper11}, just sets $T(c) = t$ (or, for skews, $T(\sk)= t$), with $t$ being a fixed threshold.
A simple variant of the fixed threshold is to consider that it is not
the absolute value of the threshold which is fixed, but a relative rate or proportion $\rate$ over the data set. In other words, this method tries to {\em quantify} the number of positive examples given by the threshold.
For example, we could say that our threshold is fixed to predict 30\% positives and the rest negatives.
This of course involves ranking the examples by their scores and setting a threshold at the appropriate position.
We will develop this idea for cost proportions below.% switching to skews at the end of the section.

\subsection{The \rcurve for cost proportions}

The definition of of the \emph{rate-fixed} threshold choice method for costs is as follows:
\begin{equation}\label{eqTq2}
\Tcostq[\rate](c) 
%\triangleq \Tskq[\rate](\sk) % commented out because skew doesn't give the same threshold as cost proportion
\triangleq  \{ t : P(s_i < t) = \rate \} 
\end{equation}
In other words, we choose the threshold such that the probability that a score is lower than the threshold -- i.e., the positive prediction rate, is $\rate$. In the example in Figure \ref{fig:example1}, any value in the interval $[0.3,0.2]$ makes that the probability (or proportion) of the score being lower than that value is $2/6= 0.33$, which approximates $\rate = 0.3$.

It is interesting to connect the previous expression of this threshold given by Eq.\ (\ref{eqTq2}) with the cumulative distributions.
\begin{lemma}\label{lemmanew}
\begin{equation}
\Tcostq[\rate](c) = \{ t: F_0(t)\pi_0 + F_1(t)\pi_1 = \rate \}
\end{equation}
\end{lemma}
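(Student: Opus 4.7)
The plan is to argue that the lemma is essentially an immediate application of the law of total probability, where the conditioning is done on the class label. Starting from the defining equation $P(s_i < t) = \rate$ in Eq.~(\ref{eqTq2}), I would condition on $y_i$ and write
\begin{equation*}
P(s_i < t) = P(s_i < t \mid y_i = 0)\,P(y_i = 0) + P(s_i < t \mid y_i = 1)\,P(y_i = 1).
\end{equation*}
Using the notation introduced in Section~\ref{notation}, the unconditional class probabilities are exactly $\pi_0$ and $\pi_1$, while the class-conditional quantities are $F_0(t) = P(s \le t \mid 0)$ and $F_1(t) = P(s \le t \mid 1)$. Substituting gives $P(s_i < t) = F_0(t)\pi_0 + F_1(t)\pi_1$, so the set of thresholds satisfying the defining equation is precisely $\{t : F_0(t)\pi_0 + F_1(t)\pi_1 = \rate\}$, as claimed.

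A small bookkeeping point that needs to be addressed is the discrepancy between the strict inequality used in Eq.~(\ref{eqTq2}) and the non-strict inequality appearing in the definitions of $F_0$ and $F_1$ recalled from Section~\ref{notation}. For continuous score distributions this is immaterial because $P(s_i = t) = 0$. For the empirical case, the set of admissible thresholds is an interval over which $P(s_i < t)$ is constant away from the score values, so we can always pick a representative $t$ in the relevant open interval between two distinct scores to make strict and non-strict inequalities coincide; this is consistent with the way the $\argmin$ in the optimal threshold choice method is interpreted in Section~\ref{optimal}.

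I do not anticipate a genuine obstacle here; the content of the lemma is simply to re-express the marginal cdf of scores as a class-prior-weighted mixture of the class-conditional cdfs. The only choice to make when writing the proof is how explicitly to handle the strict-vs-non-strict inequality subtlety, and I would keep that remark brief since the paper's notation already blurs this distinction in the empirical setting.
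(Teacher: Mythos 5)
Your proposal is correct and follows essentially the same route as the paper's own proof: decompose $P(s_i < t)$ by the law of total probability conditioned on the class, identify the conditional terms with $F_0(t)$ and $F_1(t)$ and the priors with $\pi_0$ and $\pi_1$, and substitute into Eq.~(\ref{eqTq2}). Your added remark on the strict-versus-non-strict inequality is a minor refinement the paper glosses over, but it does not change the argument.
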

\begin{proof}
We can rewrite:
\[
P(s < t) = P(s < t|0)P(0) + P(s < t|1)P(1)
\]
But using the definition of $P(s < t|0)$ and $P(s < t|1)$ in the preliminaries in terms of the cumulative distributions, we have:
\[
P(s < t) = F_0(t)P(0) + F_1(t)P(1)=  F_0(t)\pi_0 + F_1(t)\pi_1
\]
so substituting into Eq.\ \ref{eqTq2} we have the result:
\[
\Tcostq[\rate](c) =  \{ t: F_0(t)\pi_0 + F_1(t)\pi_1 = \rate \}
\]
\end{proof}
\noindent
This straightforward result shows that this criterion clearly depends on the classifier, but it only takes the ranks into account, not the magnitudes of the scores.

However, there is a natural way of setting the positive prediction rate in an adaptive way. Instead of fixing the proportion of positive predictions, we
may take the operating condition into account.
If we have an operating condition, we can use the information about the skew or cost proportion to adjust the positive prediction rate to that proportion.
This leads to the \emph{rate-driven} threshold selection method: if we are given cost proportion $c$, we choose the threshold $t$ in such a way that we get a proportion of $c$ positive predictions.
\begin{equation}\label{eqTn}
\Tcostn(c) 
%\triangleq \Tcostq_c(c) 
\triangleq  \Tcostq[c](c) 
= \{ t : P(s_i < t) = c \}  
\end{equation}
And given this threshold selection method, we can now derive its cost curve:
\begin{equation}\label{eq:CCcostn}
\CCcostn(c) \triangleq \Qcost(\Tcostn(c); c)
\end{equation}
Because of Lemma \ref{lemmanew}, we can see that this is equivalent to:
\begin{equation}
\CCcostn(c) = \Qcost(\{ t: F_0(t)\pi_0 + F_1(t)\pi_1 = c \}; c)
\end{equation}
Assuming no ties, we see that the expression $F_0(t)\pi_0 + F_1(t)\pi_1$ only changes its value between scores.
If have $n$ examples, it only changes $n+1$ times. So for finite populations, this has to be rewritten as follows:
\begin{equation}
\CCcostn(c) = \Qcost(\{ t: c-\frac{1}{n+1} < F_0(t)\pi_0 + F_1(t)\pi_1 \leq c \}; c)
\end{equation}
This leads to $n+1$ intervals in cost space where the threshold is not changed in each of these intervals. This means that the cost line is the same. This leads to the following procedure:

$\:$

\noindent {\bf \rcurve for cost proportions}: $\CCcostn$ \\
Given a classifier and a data set with $n$ examples:
\begin{enumerate}
\item Draw the $n+1$ cost lines, $CL_0$ to $CL_n$.
\item From left to right, draw the curve following each cost line (from $CL_0$ to $CL_n$) with a width on the $\xaxis$ of $\frac{1}{n+1}$.
\end{enumerate}

\begin{figure*}
\centering
%\includegraphics[width=0.4\textwidth]{example1.pdf}
% This is done with classifier inp28
\includegraphics[width=0.325\textwidth]{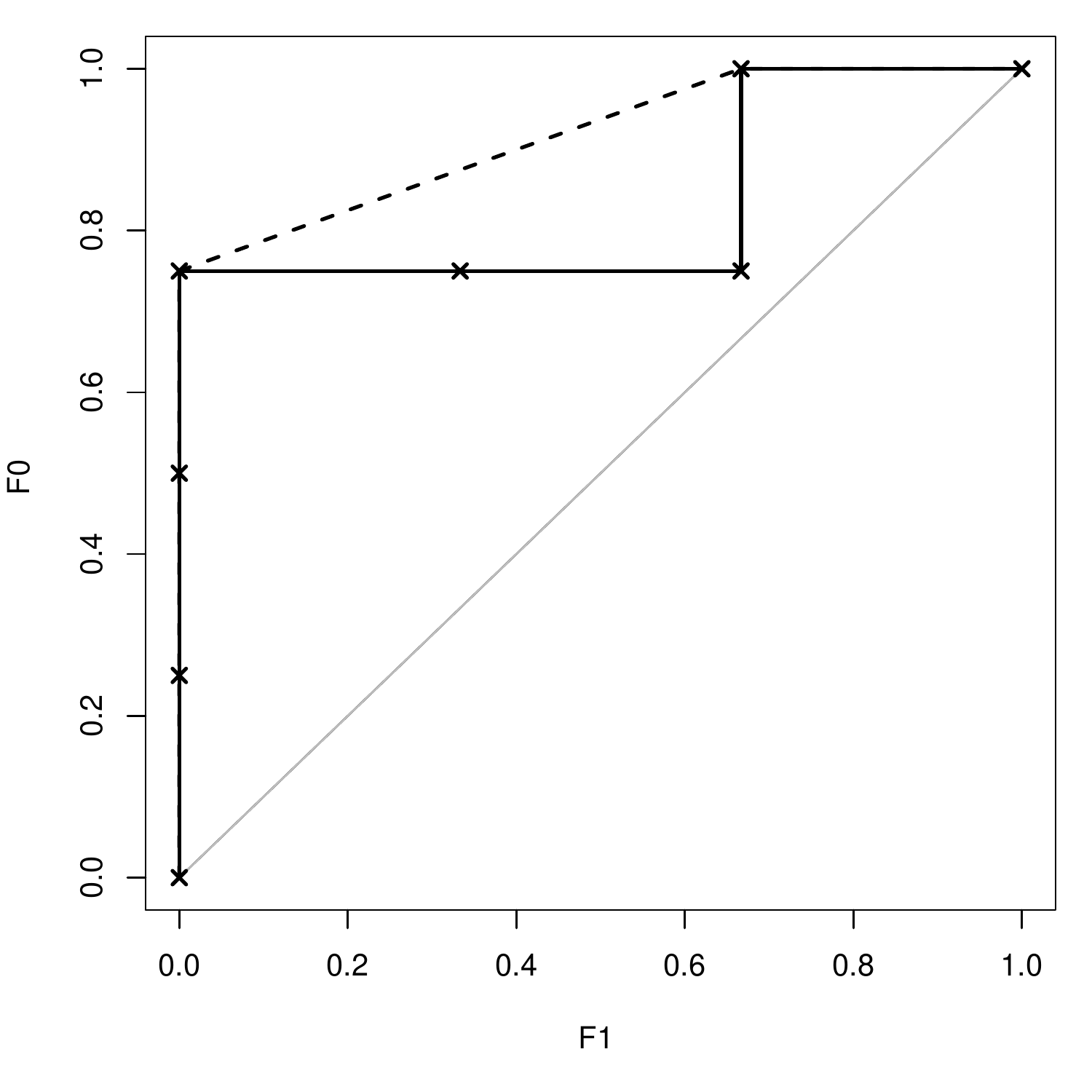}
\includegraphics[width=0.325\textwidth]{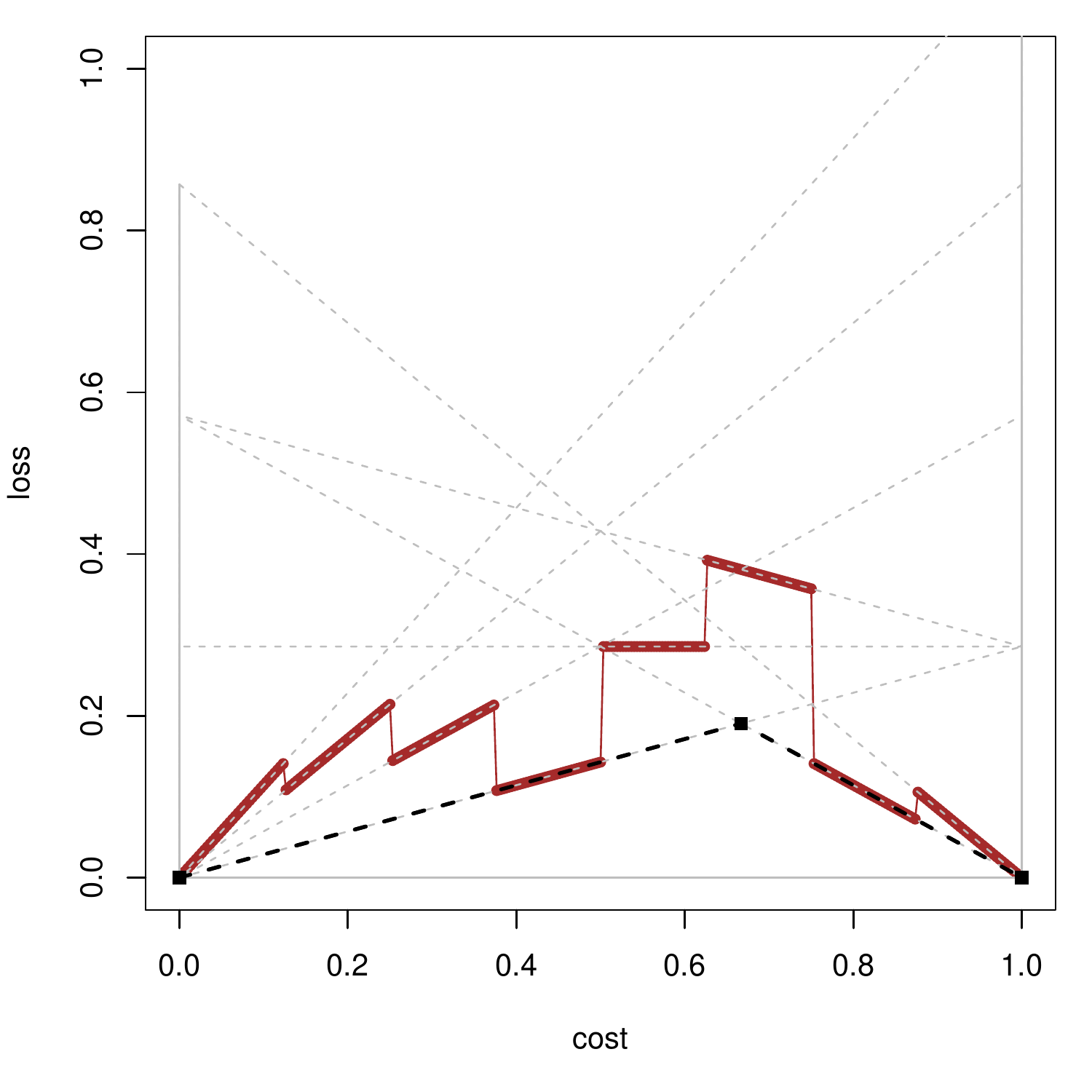}
%\includegraphics[width=0.325\textwidth]{ROCskewcurve-7examples.pdf}
%\vspace{-0.4cm}
\caption{Several graphical representations for the classifier 
 with probability estimates (0.95,0.9,0.8,0.3,0.2, 0.1,0.05)
 and classes (1,0,1,1,0,0,0). 
Left: ROC curve (solid) and convex hull (dashed). 
Right: cost lines, optimal cost curve (dashed) and \rcurve (thick and solid) against cost proportions. 
%Right: cost lines, optimal cost curve (dashed) and \rcurve (thick and solid) against skews.
}
\label{fig:example2}
%\vspace{-0.4cm}
\end{figure*}

Figure \ref{fig:example2} shows a small classifier for a data set with 3 positive examples and 4 negative examples.
The ROC curve on the left has 8 points, since there are 8 cut points to choose the threshold, leading to 8 crisp classifiers and, accordingly, 8 cost lines. These cost lines are shown in the cost space in the plot on the right. %(for cost proportions) and on the right (for skews).
We see that the projection of each segment onto the \xaxis has exactly a length of 1/8. Note that each segment uses a portion of each cost line.

It is relatively easy to understand what these curves mean and to see their correspondence to ROC curves.
Following Figure \ref{fig:example2}, going from (0,0) to (1,1) in the ROC curve, the first three points are sub-optimal. 
%not good cutpoints, even though they are in the convex hull, because they will never be chosen in ROC analysis.
The fourth point is a good point, because this point is going to be chosen for many slopes.
The fifth and the sixth are bad points, since they are under the convex hull, and they will never be chosen.
The seventh is a good point again. The eighth is a bad point.
This is exactly what the \rcurve shows. Only the fourth and seventh segments are optimal and match the optimal cost curve.
So, the \rcurve has a segment intersecting with the optimal cost curve for every point on the convex hull. All other segments correspond to sub-optimal decision thresholds.

\section{The area under the \rcurve}\label{area}

If we plug the rate-driven threshold choice method $\Tcostn$ (Eq.\ \ref{eqTn}) into the general formula of the average expected cost for a range of cost proportions (Eq.\ \ref{eqLcost}) we have:

\begin{equation}\label{eqLcostn}
\Lcostn \triangleq \int^1_0{} \Qcost(\Tcostn(c); c) \wcost(c) dc 
\end{equation}
Using the uniform distribution, this expected loss equals the area under the \rcurve. It can be linked to the area under the ROC curve as follows.
\begin{proposition}
\[ \LcostnU =  2 \pi_1\pi_0  (1  - \auc) + \frac{1}{3} - \pi_1\pi_0   \]
\end{proposition}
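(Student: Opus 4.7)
The plan is to turn the defining integral for $\LcostnU$ into an integral over the threshold $t$ by using Lemma~\ref{lemmanew}, then expand and reduce the resulting polynomial integrals in $F_0$, $F_1$ to quantities that are either elementary or expressible in terms of $\auc$.

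First, by Lemma~\ref{lemmanew}, the rate-driven threshold choice method satisfies $c = R(t)$ where $R(t) \triangleq \pi_0 F_0(t) + \pi_1 F_1(t)$, so that $dc = (\pi_0 f_0(t) + \pi_1 f_1(t))\,dt$ and $R$ is monotone from $0$ to $1$. Substituting into Eq.~(\ref{eqLcostn}) with $\wcost = U$ yields
\begin{equation*}
\LcostnU \;=\; \int_{-\infty}^{+\infty} \Qcost(t;R(t))\,\bigl(\pi_0 f_0(t) + \pi_1 f_1(t)\bigr)\,dt.
\end{equation*}

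Second, I would expand $\Qcost(t;R(t))$ using $R = \pi_0 F_0 + \pi_1 F_1$ and $1-R = \pi_0(1-F_0) + \pi_1(1-F_1)$ in Eq.~(\ref{eqQcost}), collecting to obtain the symmetric form
\begin{equation*}
\Qcost(t;R(t)) \;=\; 2\pi_0^2 F_0(1-F_0) \;+\; 2\pi_1^2 F_1(1-F_1) \;+\; 4\pi_0\pi_1 F_1(1-F_0).
\end{equation*}
Multiplying by $\pi_0 f_0 + \pi_1 f_1$ gives a sum of terms of the form $\int F_0^a F_1^b f_k\,dt$. The elementary ones are $\int F_k^j f_k\,dt = 1/(j+1)$ (by $u=F_k$), while the definition of $\auc$ in Eq.~(\ref{eq:AUC}) and integration by parts give $\int F_0 f_1\,dt = \auc$ and $\int F_1 f_0\,dt = 1-\auc$.

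Third, I would handle the genuinely non-elementary cross terms. Writing $R_2 \triangleq \int F_0^2 f_1\,dt$ and $S_2 \triangleq \int F_1^2 f_0\,dt$, integration by parts (with $u = F_0^2$, $dv = f_1\,dt$ and symmetrically) yields $\int F_0 F_1 f_0\,dt = (1-R_2)/2$ and $\int F_0 F_1 f_1\,dt = (1-S_2)/2$. The key observation, which I expect to be the main obstacle, is that after weighting each term by the appropriate power of $\pi_0$ and $\pi_1$ and summing, all $R_2$ and $S_2$ contributions cancel — this must happen because the claimed answer depends only on $\auc$, and verifying it requires careful bookkeeping of all eight polynomial terms.

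Finally, simplification. The constant contribution collects to $(\pi_0^3 + \pi_1^3)/3$, which equals $(1 - 3\pi_0\pi_1)/3$ since $\pi_0^3 + \pi_1^3 = (\pi_0+\pi_1)^3 - 3\pi_0\pi_1(\pi_0+\pi_1) = 1 - 3\pi_0\pi_1$. The $\auc$-dependent contribution reduces to $-2\pi_0\pi_1(\pi_0 + \pi_1)\auc = -2\pi_0\pi_1 \auc$, and a remaining $+2\pi_0\pi_1$ combines with these to produce
\begin{equation*}
\LcostnU \;=\; \tfrac{1}{3} - \pi_0\pi_1 + 2\pi_0\pi_1(1-\auc),
\end{equation*}
which is the stated identity.
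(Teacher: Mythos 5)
Your proof is correct, but it takes a genuinely different route from the paper's. The paper keeps the integral in the variable $c$ and exploits the defining relation $\pi_0 F_0(\Tcostn(c)) + \pi_1 F_1(\Tcostn(c)) = c$ \emph{inside} the integrand: it splits $\Qcost$ as $2\{c\pi_0 - c[\pi_0 F_0 + \pi_1 F_1] + \pi_1 F_1\}$, replaces the bracket by $c$ so that the first part becomes the elementary polynomial $\int_0^1 2(c\pi_0 - c^2)\,dc = \pi_0 - \frac{2}{3}$, and only changes variables to $t$ (with the mixture density $\pi_0 f_0 + \pi_1 f_1$) for the single residual term $2\pi_1\int_0^1 F_1(\Tcostn(c))\,dc$, which needs nothing beyond $\int F_1 f_0\,dt = 1-\auc$ and $\int F_1 f_1\,dt = \frac{1}{2}$. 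You instead substitute $c = R(t)$ globally and expand $\Qcost(t;R(t))$ into the symmetric quadratic form $2\pi_0^2F_0(1-F_0)+2\pi_1^2F_1(1-F_1)+4\pi_0\pi_1F_1(1-F_0)$ (which I have checked), at the price of confronting the second-order integrals $R_2=\int F_0^2f_1\,dt$, $S_2=\int F_1^2f_0\,dt$ and the mixed terms $\int F_0F_1f_k\,dt$. The cancellation you flag as the main risk does go through: $R_2$ enters with coefficient $-2\pi_0^2\pi_1$ from the term $2\pi_0^2F_0(1-F_0)\cdot\pi_1f_1$ and with coefficient $+2\pi_0^2\pi_1$ from the term $4\pi_0\pi_1F_1(1-F_0)\cdot\pi_0f_0$ via your identity $\int F_0F_1f_0\,dt=(1-R_2)/2$, and symmetrically for $S_2$; the surviving terms assemble to $\frac{1}{3}-\pi_0\pi_1+2\pi_0\pi_1(1-\auc)$ exactly as you state. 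What the paper's decomposition buys is that these second-order terms never appear at all --- substituting $c$ for the mixture CDF kills them before any change of variables --- so its computation is shorter; what your version buys is a fully symmetric display of the integrand as a quadratic form in $(F_0,F_1)$, which makes the structure, and the reason the answer depends on the scores only through $\auc$, more transparent.
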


\begin{proof}
\begin{eqnarray*}
\LcostnU    & = & \int^1_0{} \Qcost(\Tcostn(c); c) U(c) dc \\
            & = &  \int^1_0{} 2 \{c \pi_0 (1 -F_0(\Tcostn(c))) + (1-c)\pi_1 F_1(\Tcostn(c))\}  dc \\
%            & = &  \int^1_0{} 2 \{c \pi_0  - c \pi_0 F_0(\Tcostn(c)) + \pi_1 F_1(\Tcostn(c)) -c\pi_1 F_1(\Tcostn(c)) \} dc \\
%            & = &  \int^1_0{} 2 \{c \pi_0  - c [ \pi_0 F_0(\Tcostn(c)) + c\pi_1 F_1(\Tcostn(c))] + \pi_1 F_1(\Tcostn(c))  \} dc \\
            & = &  \int^1_0{} 2 \{c \pi_0  - c [ \pi_0 F_0(\Tcostn(c)) + \pi_1 F_1(\Tcostn(c))] \} dc + \int^1_0{} 2 \{ \pi_1 F_1(\Tcostn(c))  \} dc 
\end{eqnarray*}
From Lemma \ref{lemmanew} we have that:
\[
\Tcostq[\rate](c) = 
%\Tskq[\rate](\sk) = 
\{ t: F_0(t)\pi_0 + F_1(t)\pi_1 = \rate \}
\]
and of course
\[
\Tcostn(c) = \{ t: F_0(t)\pi_0 + F_1(t)\pi_1 = c \}
\]
Since this is the $t$ which makes the expression equal to $c$ we can find that expression and substitute by $c$. Then we have:
\begin{eqnarray*}
\LcostnU    & = & \int^1_0{} 2 \{c \pi_0  - c ( c ) \} dc + \int^1_0{} 2 \{ \pi_1 F_1(\Tcostn(c))  \} dc \\
            & = & \left[ c^2 \pi_0  - \frac{2 c^3}{3} \right]^1_0 + \int^1_0{} 2 \{ \pi_1 F_1(\Tcostn(c))  \} dc \\						
%            & = & \pi_0  - \frac{2}{3}  + \int^1_0{} 2 \{ \pi_1 F_1(\Tcostn(c))  \} dc \\
            & = & \pi_0  - \frac{2}{3}  + 2 \pi_1 \int^1_0{} F_1(\Tcostn(c))  dc \\
\end{eqnarray*}

We have to solve the term $ \int^1_0{} F_1(\Tcostn(c))  dc$.
In order to do this, we have to see that the use of $\Tcostn(c)$ and integrating over $dc$ is like using the mixture distribution for thresholds $t$ and integrating over $dt$. 
%\red[I see this step clear now, because I've been working with an example, but this could be further justified, since this is a proof. For this we also need to assume that the function is continuous.] Namely,
%
\begin{eqnarray*}
\int^1_0{} F_1(\Tcostn(c))  dc & = & \int^\infty_{-\infty}{} F_1(t) (\pi_0 f_0(t) + \pi_1 f_1(t)) dt  \\
            & = &  \pi_0 \int^\infty_{-\infty}{} F_1(t) f_0(t) dt + \pi_1 \int^\infty_{-\infty}{} F_1(t) f_1(t) dt \\
						& = &  -\pi_0 \int^\infty_{-\infty}{} (-1 + 1 - F_1(t)) f_0(t) dt + \pi_1 \int^\infty_{-\infty}{} F_1(t) dF_1(t) \\
						& = &  -\pi_0 \int^\infty_{-\infty}{} -1 dt   -\pi_0 \int^\infty_{-\infty}{} (1 - F_1(t)) f_0(t) dt + \pi_1 \int^\infty_{-\infty}{} F_1(t) dF_1(t) \\
						& = &  \pi_0   -\pi_0 \auc + \frac{\pi_1}{2} \\
            & = &  \pi_0  (1  - \auc) + \frac{\pi_1}{2} 
\end{eqnarray*}
And now we can plug this in the expression for the expected cost:
\begin{eqnarray*}
\LcostnU    & = & \pi_0  - \frac{2}{3}  + 2 \pi_1 (\pi_0  (1  - \auc) + \frac{\pi_1}{2} ) \\
            & = & \pi_0  - \frac{2}{3}  + 2 \pi_1\pi_0  (1  - \auc) + \pi_1 \pi_1  \\						
            & = & 2 \pi_1\pi_0  (1  - \auc) + \pi_1 (1- \pi_0)  + \pi_0  - \frac{2}{3}   \\										
%            & = & 2 \pi_1\pi_0  (1  - \auc) + \pi_1 + \pi_0 - \pi_1\pi_0  - \frac{2}{3}   \\							
            & = & 2 \pi_1\pi_0  (1  - \auc) + 1 - \pi_1\pi_0  - \frac{2}{3}   \\													
            & = & 2 \pi_1\pi_0  (1  - \auc) + \frac{1}{3} - \pi_1\pi_0     
\end{eqnarray*}
\end{proof}
\noindent
This shows that not only has this new curve a clear correspondence to ROC curves, but its area is linearly related to $\auc$.

From costs to skews we have by Lemma \ref{lemma-balance} :
\begin{corollary}
\[ \LsknU =  \frac{1-\auc}{2} + {1 \over 12} \]
\end{corollary}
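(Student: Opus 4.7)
The plan is to derive the corollary as a direct consequence of the preceding Proposition by invoking Lemma~\ref{lemma-balance}. That lemma tells us that whenever classes are balanced ($\pi_0=\pi_1=1/2$), the skew $z$ and the cost proportion $c$ coincide and the loss functions $\Qsk(t;z)$ and $\Qcost(t;c)$ agree on the $[0,1]$ scale (for our choice $b=2$). Moreover, the rate-driven threshold choice method $\Tcostn(c)=\{t:F_0(t)\pi_0+F_1(t)\pi_1=c\}$ reduces, under balance, to $\{t:(F_0(t)+F_1(t))/2=c\}$, and the analogous skew-based threshold choice $\Tskn(z)$ is defined identically. Hence the integrand for $\LsknU$ and $\LcostnU$ is the same function on $[0,1]$, so the two integrals agree. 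Crucially, $\auc$ is defined purely in terms of the class-conditional distributions $F_0,F_1$ and does not depend on the class priors, so it is unaffected by the substitution $\pi_0=\pi_1=1/2$.

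With this in hand, the derivation is simply to take the formula
\[
\LcostnU = 2\pi_1\pi_0(1-\auc) + \frac{1}{3} - \pi_1\pi_0
\]
from the Proposition and evaluate it at $\pi_0=\pi_1=1/2$. Then $\pi_0\pi_1 = 1/4$, so the first term becomes $(1-\auc)/2$ and the remaining constant becomes $\frac{1}{3}-\frac{1}{4}=\frac{1}{12}$, yielding exactly the claimed expression for $\LsknU$.

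There is essentially no obstacle here: the work was done in the Proposition, and Lemma~\ref{lemma-balance} is precisely the transfer device that was designed to convert cost-proportion expressions to skew expressions by setting $\pi_0=\pi_1=1/2$ and $z=c$. The only thing worth double-checking is that the rate-driven threshold choice method behaves correctly under this identification (which it does, since its defining equation $F_0(t)\pi_0+F_1(t)\pi_1=c$ becomes $(F_0(t)+F_1(t))/2=z$, matching the natural skew-based analogue), and that $\auc$ is invariant under the change of priors (which it is, being a functional of $F_0,F_1$ alone). So the proof is a one-line specialisation plus arithmetic.
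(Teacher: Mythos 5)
Your proposal is correct and follows exactly the route the paper intends: the paper derives the corollary from the Proposition ``by Lemma~\ref{lemma-balance}'', i.e.\ by setting $\pi_0=\pi_1=1/2$ and $z=c$, which is precisely your one-line specialisation giving $2\cdot\frac{1}{4}(1-\auc)+\frac{1}{3}-\frac{1}{4}=\frac{1-\auc}{2}+\frac{1}{12}$. Your additional checks (that the rate-driven threshold choice and \auc behave correctly under the identification) are sensible but not a departure from the paper's argument.
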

%
%\begin{proof}
%By Lemma \ref{lemma-balance} we have that this is the same as assuming $\pi_0=\pi_1=0.5$ and hence:
%
%\begin{eqnarray*}
%\LcostnU & = & 2 \cdot 0.5 \cdot 0.5  (1  - \auc) + \frac{1}{3} - 0.5 \cdot 0.5  \\
%\LsknU & = & \frac{1}{2}  (1 - \auc)  + \frac{1}{3} - \frac{1}{4} \\
%\LsknU & = & \frac{1-\auc}{2} + {1 \over 12}
%\end{eqnarray*}
%\end{proof}
%
%From the previous result, we can see that it is equivalent to the GINI index plus a small constant value ${1 \over 12}$. %%% PF: this is not true as the Gini index is 2AUC-1. 
\noindent
Thus, expected loss is 1/3 for a random classifier, $1/3-1/4=1/12$ for a perfect classifier and $1/3+1/4=7/12$ for the worst possible classifier.

The previous results are obtained for continuous curves with an infinite number of examples.
For empirical curves with a limited number of examples, the result is not exact, but a good approximation.
For instance, for the example in Figure \ref{fig:example2}, we have that $\auc$ is 0.83333.
The area under the \rcurve is 
% 0.170046  % for 300 points
0.1695 % for 1000 points
 for cost proportions, while the theoretical result $2 \pi_1\pi_0  (1  - \auc) + \frac{1}{3} - \pi_1\pi_0 $ gives 0.1701. 
%The area under the \rcurve is 
%% 0.1642 % using option `equal' and 300 points and n-1 lines
%% % using option `scaled' and 1000 points
%0.1742 % using option `redefined' and 1000 points and n+1 lines
%for skews, while the theoretical result $\frac{1  - \auc}{2} + \frac{1}{12}$ gives 0.1667.
%This difference is significant, and it is so important because the theoretical result is for $n \rightarrow \infty$, and here we have $n=7$.
It should be possible to come up with an exact formula for empirical ROC curves; we leave this as an open problem. 

It is interesting to use these general results to get more insight about what the \rcurves mean exactly.
For instance, Figure \ref{fig:example-best-balanced} shows the ROC curve and the \rcurves for a perfect ranker and a balanced data set. 
We used a large number of split points in the ranking to simulate the continuous case. 
%Since the data set is balanced, both curves (for cost proportions and skews are equal) we see that this approximates 1/12. In fact, each of the two hills has an area of approximately 1/6.
We see that our new threshold choice method makes optimal choices for $c=0$, $c=1/2$ and $c=1$ but sub-optimal choices for other operating conditions, which explains the non-zero area under the \rcurve ($1/12$ in the continuous case). 
The optimal choice in this case is to ignore the operating condition altogether and always split the ranking in the middle.

\begin{figure*}
\centering
\includegraphics[width=0.4\textwidth]{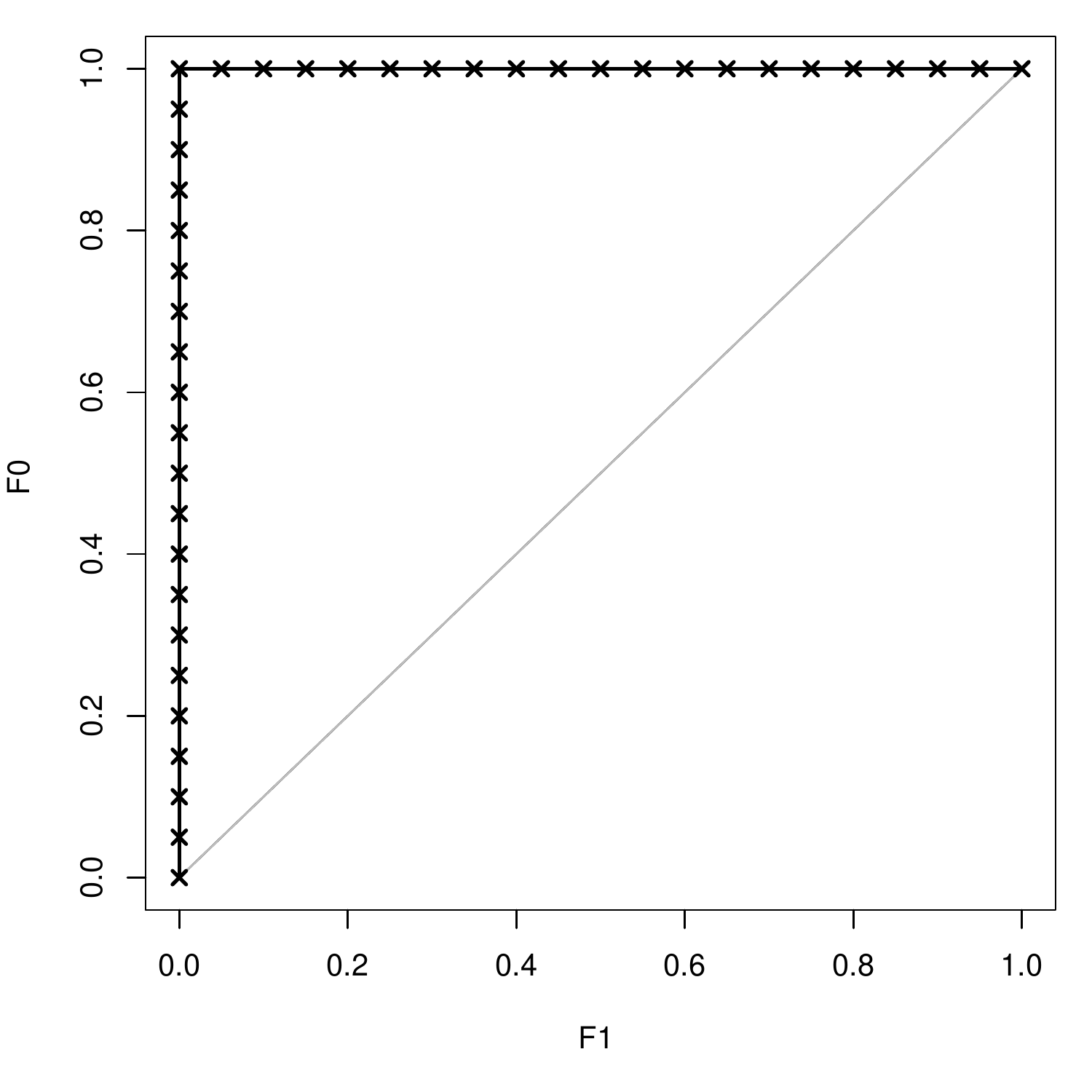}
\includegraphics[width=0.4\textwidth]{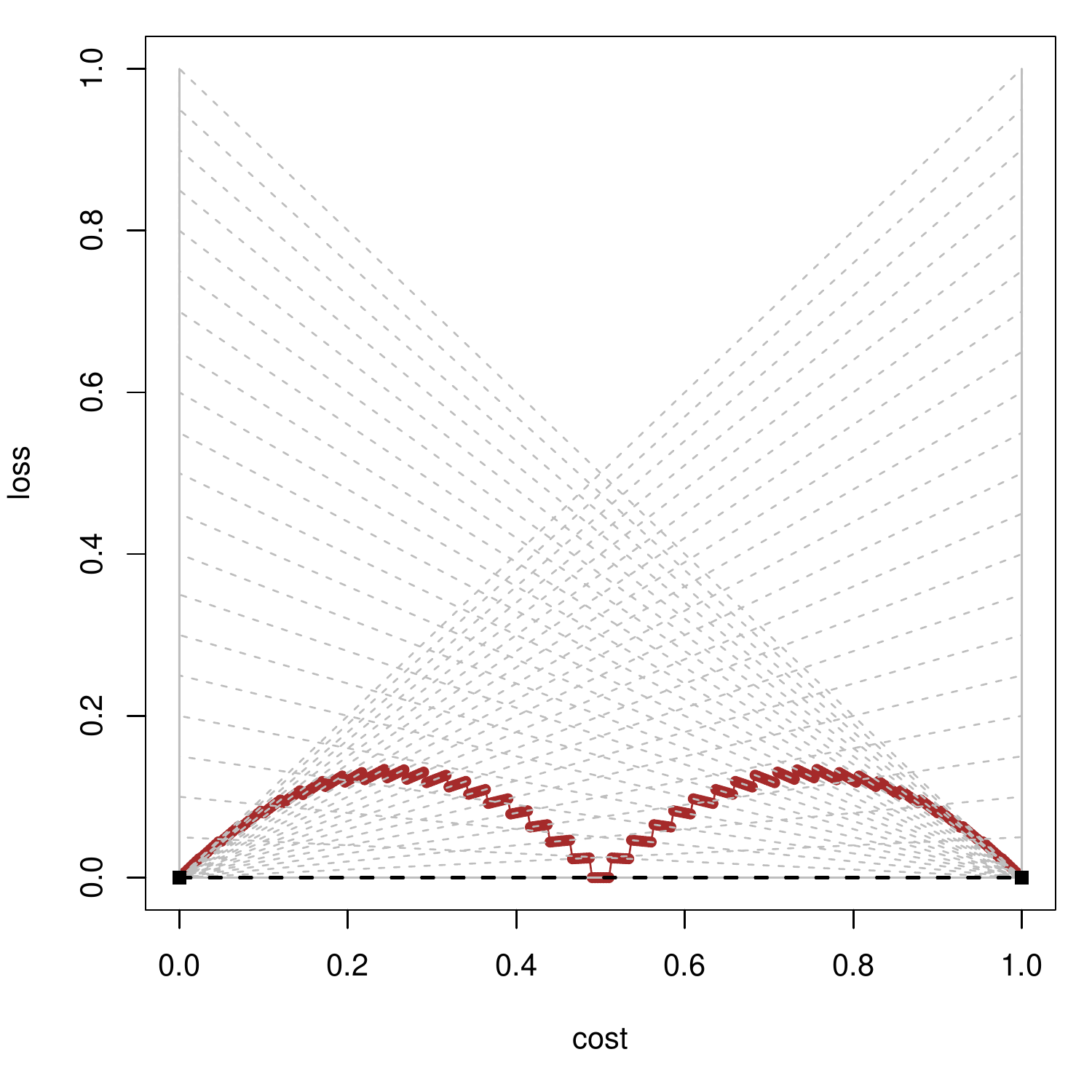}
%\includegraphics[width=0.325\textwidth]{ROCskewcurve-BEST.pdf}
%\vspace{-0.4cm}
\caption{Several graphical representations for a perfect and balanced classifier with 20 positive examples and 20 negative examples. 
Left: ROC curve (solid) and convex hull (dashed). Right: cost lines, optimal cost curve (dashed) and \rcurve (thick and solid) against cost proportions. }
%Right: cost lines, optimal cost curve (dashed) and \rcurve (thick and solid) against skews.}
\label{fig:example-best-balanced}
%\vspace{-0.4cm}
\end{figure*}

Figure \ref{fig:example-worst-balanced} shows what the \rcurve looks like for the
worst ranker possible. 
%Again there are much better (but unrealistic) choices. The best is 1/4.
The lower envelope of the cost lines shows that in this case the optimal choice is to always predict 0 if $c<1/2$ and 1 if $c>1/2$, which results in an expected loss of $1/4$. 
In contrast, our new threshold choice method also takes the non-optimal split points into account and hence incurs a higher expected loss ($7/12$ in the continuous case). 
Figure \ref{fig:example-random-balanced} shows what the \rcurve looks like for a classifier which is alternating (close to the diagonal in the ROC space) with $\auc \approx 0.5$. 
Here, the expected loss approximates $4/12= 1/3$, while the optimal choice is the same as in the previous case.
It is not hard to prove that in the limiting case $n \rightarrow \infty$, the ROC cost curve for a random classifier is described by the function $y = 2c (1-c)$, which is the Gini index (the impurity measure, not to be confused with the Gini coefficient which is $2\auc-1$).

\begin{figure*}
\centering
\includegraphics[width=0.4\textwidth]{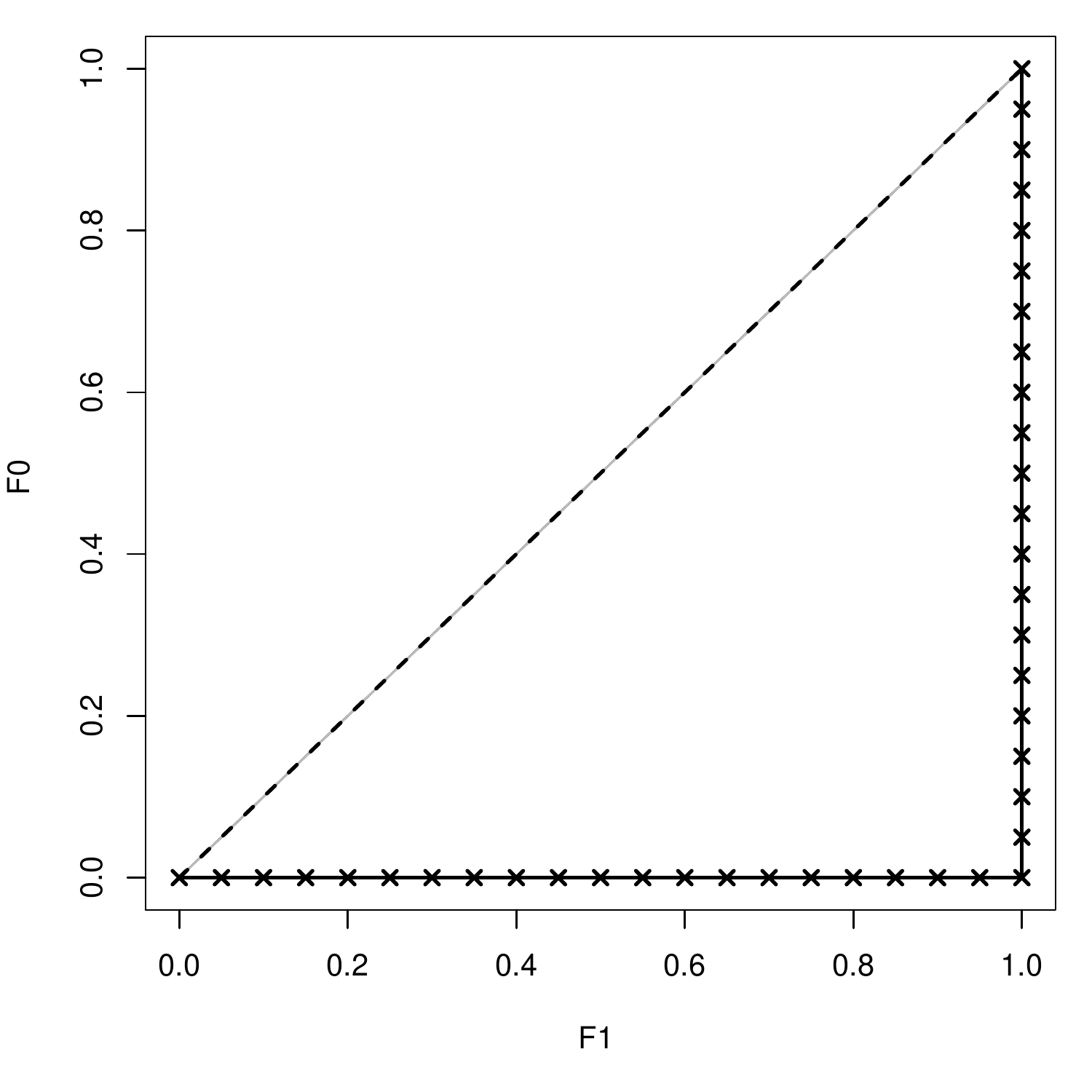}
\includegraphics[width=0.4\textwidth]{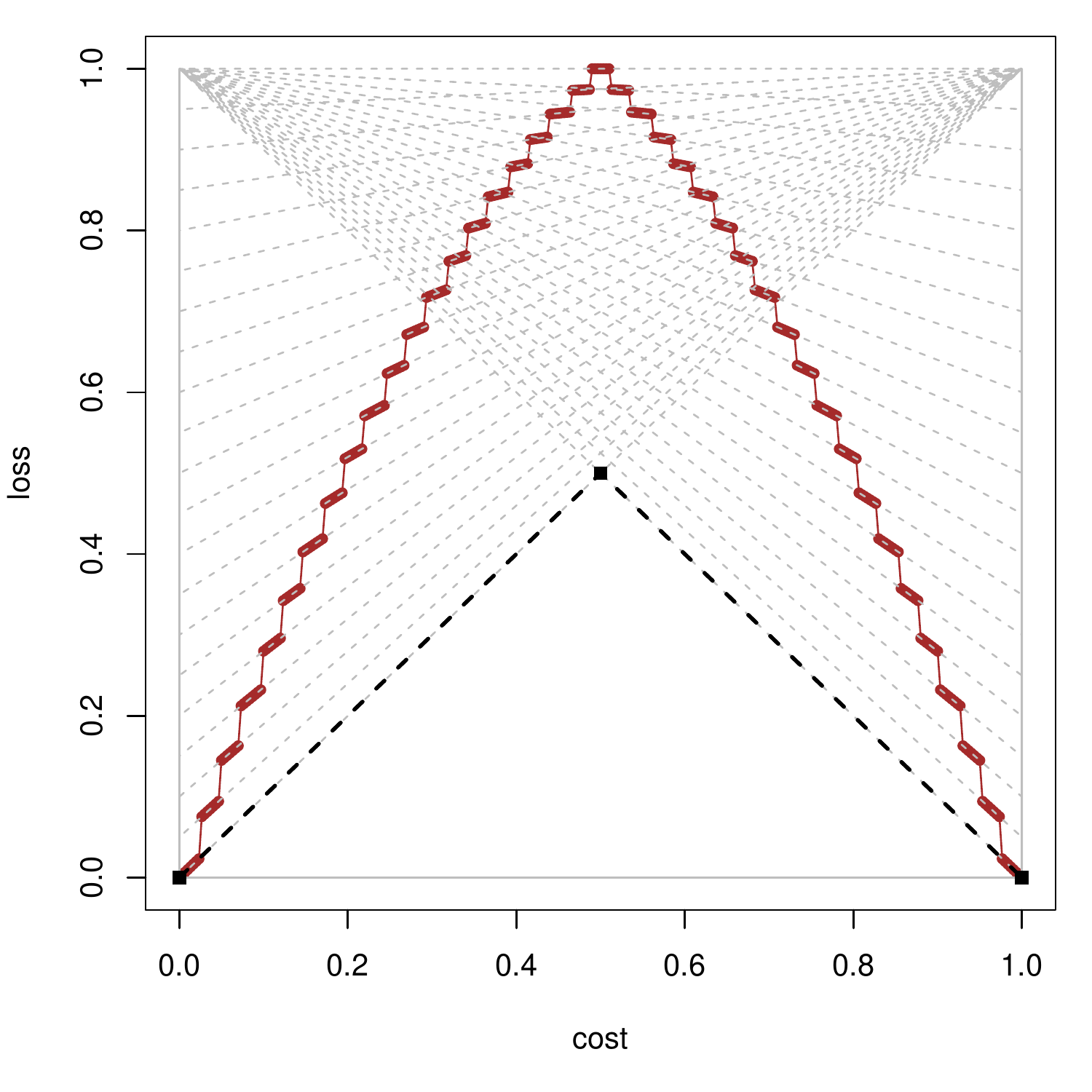}
%\includegraphics[width=0.325\textwidth]{ROCskewcurve-WORST.pdf}
%\vspace{-0.4cm}
\caption{Several graphical representations for a very bad classifier (all 0 are ranked before the 1) and balanced classifier with 20 positive examples and 20 negative examples. 
Left: ROC curve (solid) and convex hull (dashed). 
Right: cost lines, optimal cost curve (dashed) and \rcurve (thick and solid) against cost proportions. }
%Right: cost lines, optimal cost curve (dashed) and \rcurve (thick and solid) against skews.}
\label{fig:example-worst-balanced}
%\vspace{-0.4cm}
\end{figure*}

\begin{figure*}
\centering
\includegraphics[width=0.4\textwidth]{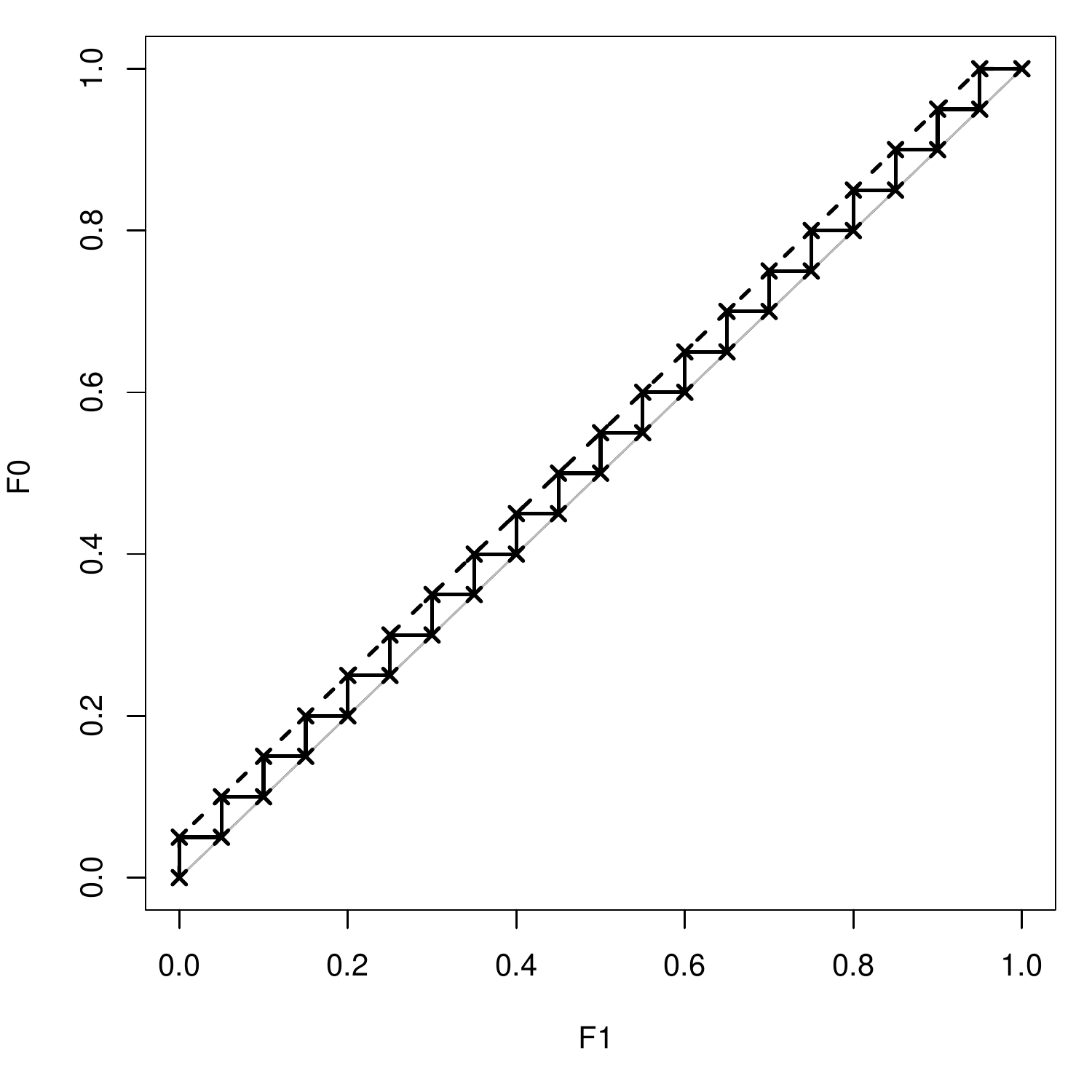}
\includegraphics[width=0.4\textwidth]{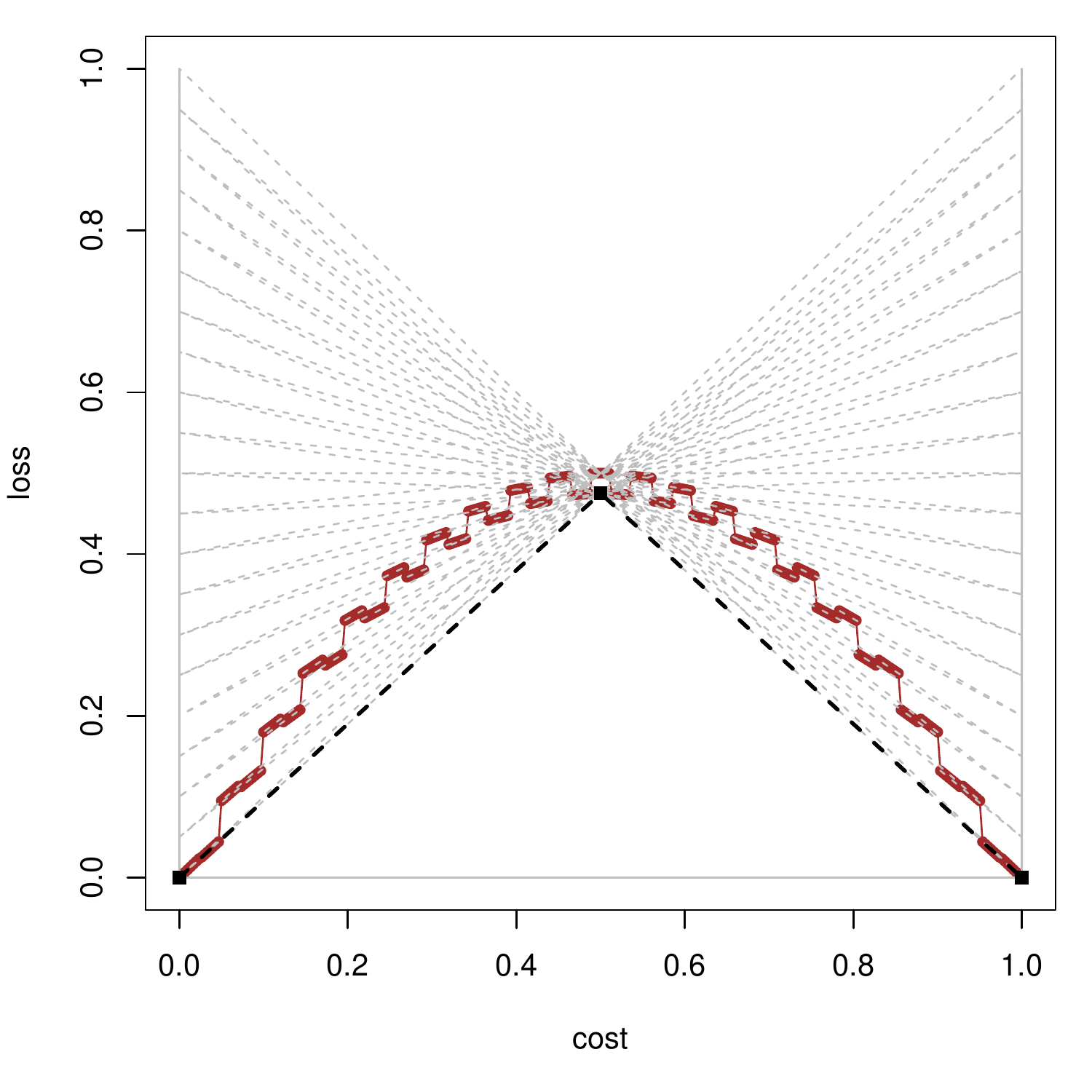}
%\includegraphics[width=0.325\textwidth]{ROCskewcurve-RANDOM.pdf}
%\vspace{-0.4cm}
\caption{Several graphical representations for an alternating (order is 1,0,1,0, ....) and balanced classifier with 20 positive examples and 20 negative examples. 
Left: ROC curve (solid) and convex hull (dashed). 
Right: cost lines, optimal cost curve (dashed) and \rcurve (thick and solid) against cost proportions. }
%Right: cost lines, optimal cost curve (dashed) and \rcurve (thick and solid) against skews.}
\label{fig:example-random-balanced}
%\vspace{-0.4cm}
\end{figure*}

\section{Evenly-spaced scores. The relation between $\auc$ and the Brier score}\label{Brier}

An alternative threshold choice method is to choose $\hat{p}(1|x) = op$ where $op$ is the operating condition. 
This is a natural criterion as it has been used especially when the classifier is a probability estimator. Drummond and Holte \cite{drummond-and-Holte2006} say it is a common example of a ``performance independent criterion''. Referring to Figure 22 in their paper which uses this threshold choice they say: ``the performance independent criterion, in this case, is to set the threshold to correspond to the operating
conditions. For example, if $PC(+)$ = 0.2. the Naive Bayes threshold is set to 0.2". The term $PC(+)$ is equivalent to our `skew'.

Let us see the definition of this method, that we call probabilistic threshold choice (as presented in \cite{ICML11Brier}). We first give the formulation which uses cost proportions for operating conditions:
\begin{equation}\label{eqTcostp}
\Tcostp(c) \triangleq  c
\end{equation}
We define the same thing for $\Tskp(\sk)$:
\begin{equation}\label{eqTskp}
\Tskp(\sk) \triangleq \sk
\end{equation} 
%
%The following derivation is from \cite{ICML11Brier}. 
If we plug $\Tcostp$ into the general formula of the average expected cost (Eq.\ \ref{eqLcost}) we have the expected probabilistic cost:
\begin{equation}\label{eqLcostp}
\Lcostp \triangleq \int^1_0{} \Qcost(\Tcostp(c); c) \wcost(c) dc =  \int^1_0{} \Qcost(c; c) \wcost(c) dc
\end{equation}
And if we use the uniform distribution and the definition of $\Qcost$ (Eq.\ \ref{eqQcost}):
\begin{eqnarray}\label{eqLcostpU}
\LcostpU & \triangleq & \int^1_0{} \Qcost(c; c) U(c) dc  \nonumber \\
         & =          & \int^1_0{} 2 \{ c \pi_0 (1 -F_0(c)) + (1-c)\pi_1 F_1(c) \} U(c) dc \nonumber \\
%				 & =          & \int^1_0{} 2 \{ c \pi_0 (1 -F_0(c)) + (1-c)\pi_1 F_1(c) \} dc \nonumber \\
  			 & =          & \int^1_0{} 2 \{ c \pi_0 (1 -F_0(c)) \} dc + \int^1_0{} 2 \{ (1-c)\pi_1 F_1(c) \} dc   \label{eqLcostpUlast}
\end{eqnarray}

From here, it is easy to get the following:
\begin{theorem}[\cite{ICML11Brier}]\label{thm:LcostpUequalsBS2}
The expected loss using a uniform distribution for cost proportions is the Brier score. 
\end{theorem}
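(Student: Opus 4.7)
The plan is to start from the last expression given for $\LcostpU$ in Eq.~(\ref{eqLcostpUlast}), which already splits the overall loss as a class-weighted sum of two one-dimensional integrals over $c$:
\begin{equation*}
\LcostpU = 2\pi_0 \int_0^1 c\,(1-F_0(c))\,dc \;+\; 2\pi_1 \int_0^1 (1-c)\,F_1(c)\,dc.
\end{equation*}
My aim is to identify the first integral with $\pi_0\bs_0 = \pi_0\int_0^1 s^2 f_0(s)\,ds$ and the second with $\pi_1\bs_1 = \pi_1\int_0^1 (1-s)^2 f_1(s)\,ds$, after which Eq.~(\ref{eqBS}) gives the conclusion $\LcostpU = \bs$.

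For the class-$0$ term I would integrate by parts with $u=1-F_0(c)$ and $dv=2c\,dc$, so $du=-f_0(c)\,dc$ and $v=c^2$. The boundary contribution is $[c^2(1-F_0(c))]_0^1$, which vanishes because $F_0(1)=1$ and $c^2|_{c=0}=0$, leaving $\int_0^1 c^2 f_0(c)\,dc = \bs_0$. For the class-$1$ term I would integrate by parts with $u=F_1(c)$ and $dv=2(1-c)\,dc$, giving $du=f_1(c)\,dc$ and $v=-(1-c)^2$. Again the boundary term $[-(1-c)^2 F_1(c)]_0^1$ vanishes (using $F_1(0)=0$ and $(1-c)^2|_{c=1}=0$), leaving $\int_0^1 (1-c)^2 f_1(c)\,dc = \bs_1$.

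Combining, $\LcostpU = \pi_0\bs_0 + \pi_1\bs_1 = \bs$, as required. The argument is essentially a double integration by parts, so there is no real technical obstacle; the only subtlety worth verifying up front is that the scores lie in $[0,1]$ so that $F_k$ really is $0$ at $0$ and $1$ at $1$, which is what guarantees that the boundary terms disappear. (If one wanted to allow scores on a broader scale, the same calculation would go through after extending the integration range and checking integrability at the endpoints.)
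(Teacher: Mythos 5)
Your proof is correct and is essentially the paper's own argument run in reverse: the paper integrates $\bs_0$ and $\bs_1$ by parts to arrive at the expression in Eq.~(\ref{eqLcostpUlast}), whereas you integrate that expression by parts to recover $\bs_0$ and $\bs_1$ --- the same identity, the same boundary terms (including the implicit use of $F_1(0)=0$ and $F_0(1)=1$, which the paper also relies on). No gap; nothing further needed.
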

\begin{proof}
\blue[
We have $\bs = \pi_0 \bs_0 + \pi_1 \bs_1$. 
Using integration by parts, we have 
]
\begin{align*}
\bs_{0} & = \int_{0}^{1} s^{2}f_{0}(s) ds 
  = \left[ s^{2}F_{0}(s) \right]_{s=0}^{1} - \int_{0}^{1} 2sF_{0}(s) ds \\
 & = 1 - \int_{0}^{1} 2sF_{0}(s) ds
  = \int_{0}^{1} 2s ds - \int_{0}^{1} 2sF_{0}(s) ds
\end{align*}
\blue[
Similarly for the negative class:  
]
\begin{align*}
\bs_{1} & = \int_{0}^{1} (1-s)^{2}f_{1}(s) ds \\
 & = \left[ (1-s)^{2}F_{1}(s) \right]_{s=0}^{1} + \int_{0}^{1} 2(1-s)F_{1}(s) ds \\
 & = \int_{0}^{1} 2(1-s)F_{1}(s) ds
\end{align*}
\blue[
Taking their weighted average, we obtain
]
\begin{align*}
\bs & = \pi_0 \bs_0 + \pi_1 \bs_1 \\
 & = \int_{0}^{1} \{\pi_{0}(2s-2sF_{0}(s)) + \pi_{1}2(1-s)F_{1}(s)\} ds
\end{align*}
\blue[
which, after reordering of terms and change of variable, is the same expression as] %Eq.~(\ref{eqLcostp}). 
Eq.~(\ref{eqLcostpUlast}).
\end{proof}
\noindent
In \cite{ICML11Brier} we introduced the Brier curve as a plot of $\Qcost(c; c)$ against $c$, so this theorem states that the area under the Brier curve is the Brier score. 

Given a classifier with scores, we may use its scores to try to get better threshold choices with this choice, or we may ignore the scores and use evenly-spaced scores. Namely, we can just assign the $n$ scores such that $s_i = \frac{i-1}{n-1}$, going then from 0 to 1 with steps of $\frac{1}{n-1}$.
With this simple idea we see that the probabilistic threshold choice method reduces to $\Tcostn$, which was analysed in the previous two sections. 
And now we get a very interesting result.
\begin{corollary}\label{corollary_AUC_BS}
If scores are evenly spaced, we get that:
\begin{equation}\label{AUC_BS}
2 \pi_1\pi_0  (1  - \auc) + \frac{1}{3} - \pi_1\pi_0   = \bs
\end{equation}
\end{corollary}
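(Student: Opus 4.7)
The plan is to combine the two expected-loss identities already established in the excerpt: the Proposition, which tells us that $\LcostnU = 2\pi_1\pi_0(1-\auc) + \tfrac{1}{3} - \pi_1\pi_0$, and Theorem~\ref{thm:LcostpUequalsBS2}, which tells us that $\LcostpU = \bs$. The corollary will then follow immediately from showing that, under the evenly-spaced assumption, the two threshold-choice methods $\Tcostp$ and $\Tcostn$ produce the same thresholds for every operating condition $c \in [0,1]$, so that $\LcostpU = \LcostnU$.

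First I would argue that if $s_i = \tfrac{i-1}{n-1}$ for $i=1,\dots,n$, then the overall empirical distribution of scores is (up to endpoint effects) the uniform distribution on $[0,1]$. In particular, the mixture cumulative distribution $\pi_0 F_0(t) + \pi_1 F_1(t) = P(s \leq t)$ evaluated at $t = c$ equals $c$. By Lemma~\ref{lemmanew}, this is exactly the condition defining $\Tcostn(c)$, and it is also the defining equation $\Tcostp(c) = c$. Hence $\Tcostp(c) = \Tcostn(c)$ for every $c$, and plugging into the definitions (Eq.~(\ref{eqLcostp}) and Eq.~(\ref{eqLcostn})) shows that the two integrals $\LcostpU$ and $\LcostnU$ coincide as integrands, and thus coincide.

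The chain of equalities is then
\begin{equation*}
\bs \;=\; \LcostpU \;=\; \LcostnU \;=\; 2\pi_1\pi_0(1-\auc) + \tfrac{1}{3} - \pi_1\pi_0,
\end{equation*}
where the first equality is Theorem~\ref{thm:LcostpUequalsBS2}, the middle one is the identification of the two threshold-choice methods, and the last is the Proposition.

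The only delicate point I anticipate is making the ``evenly-spaced scores'' claim rigorous: strictly speaking, the Proposition was derived in the continuous/infinite-population limit, whereas the corollary is stated for a finite sample with $s_i = (i-1)/(n-1)$. The cleanest way to handle this is to interpret ``evenly spaced'' as saying that the mixture score distribution is uniform on $[0,1]$, which is exact in the limit and holds up to $O(1/n)$ discretisation effects otherwise. This is consistent with the tone of the paper, where the Proposition itself was already noted to be an approximation for empirical curves; alternatively one can state the corollary as an identity in the limit $n \to \infty$ of evenly-spaced scorers, which is the only regime where both sides are simultaneously exact.
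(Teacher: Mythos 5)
Your proof is correct and follows essentially the same route as the paper: the paper's own justification is precisely that with evenly-spaced scores the probabilistic threshold choice $\Tcostp$ reduces to the rate-driven choice $\Tcostn$, so the corollary follows by chaining Theorem~\ref{thm:LcostpUequalsBS2} with the Proposition. Your added caveat about the finite-sample versus continuous-limit discrepancy is consistent with the paper's own remark that the Proposition is only approximate for empirical curves.
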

\noindent
As far as we are aware, this is the first published connection between the area under the ROC Curve and the Brier score. 
Of course, this is related to the Brier curves introduced in \cite{ICML11Brier}, so we can also say that \briercurves and \rcurves are closely related (have the same area) if the classifier has evenly-spaced scores.%
\footnote{Working with skews instead of cost proportions, the derivation should lead to a corresponding equation to Corollary \ref{corollary_AUC_BS}, i.e.  $\frac{1}{2}(1  - \auc) + \frac{1}{12}$   = $\frac{\bs_0 + \bs_1}{2}$. This exercise is left to the reader.}

Figure \ref{fig:example-regular-imbalanced} shows a classifier with evenly-spaced scores, so that the previous corollary holds.
We can see that the \briercurve and the \rcurve have similar shapes, although they are not identical. 
We have that the $\auc = 0.7777$, $2 \pi_1\pi_0  (1  - \auc) + \frac{1}{3} - \pi_1\pi_0 = 0.203125$, where the Brier score is 0.2047101 and the area under the Brier curve is 0.2006.\footnote{These two latter numbers should be exactly equal but some small problems when dealing with ties in the implementation of the curves are causing this small difference.}

\begin{figure*}
\centering
\includegraphics[width=1\textwidth]{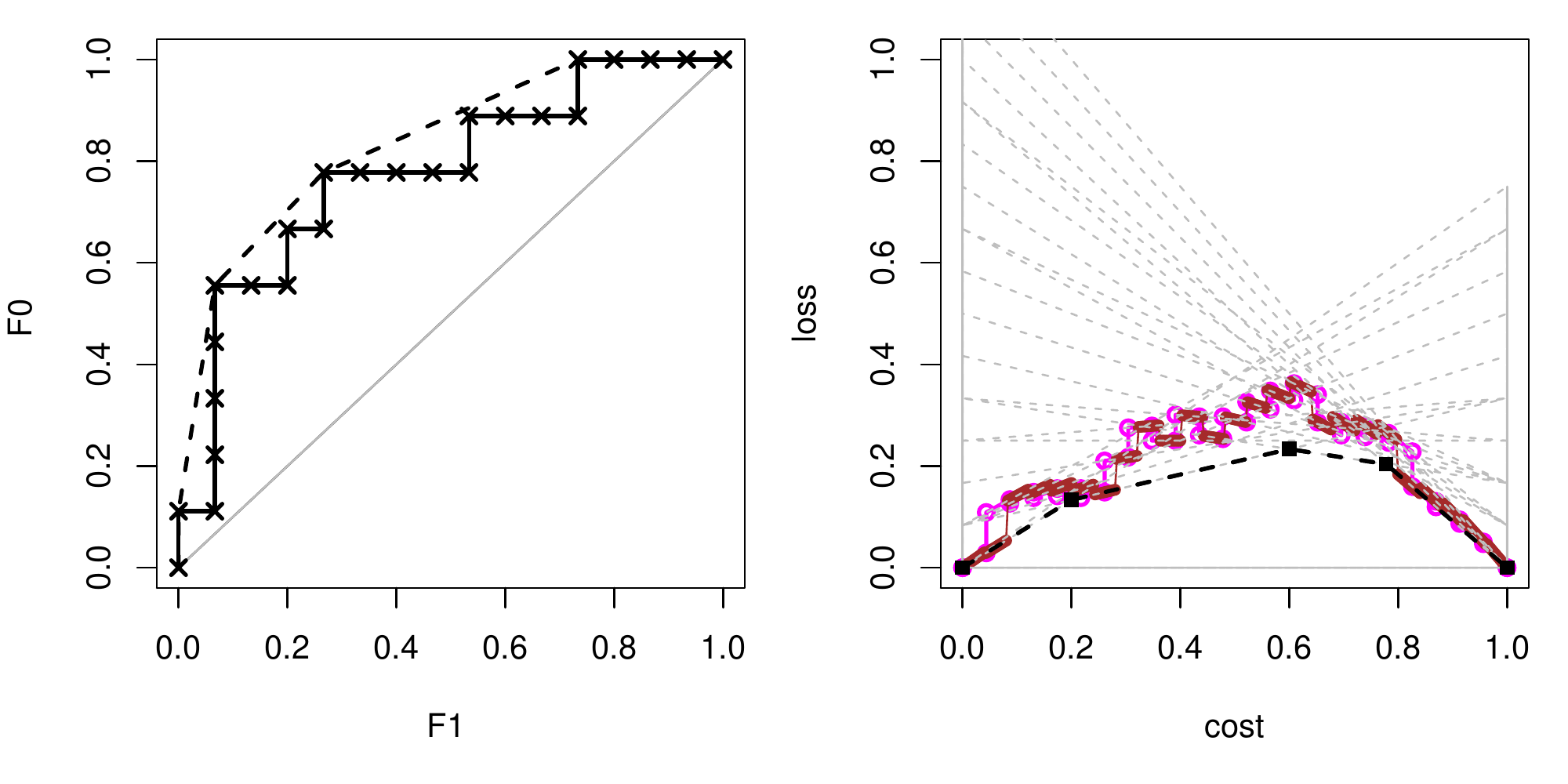}
%\vspace{-0.4cm}
% inpbigregimb
\caption{Several graphical representations for a ranker with evenly spaced scores 1 0.957 0.913 0.870 0.826 0.782 0.739 0.696 0.652 0.609 0.565 0.522 0.478 0.435 0.391 0.348 0.304 0.261 0.217 0.174 0.130 0.087 0.043 0 and true classes 1,1,1,1,0,1,1,1,0,1,1,1,1,0,1,0,1,1,0,0,0,0,1,0 (15 negative examples and 9 positive examples). 
Left: ROC curve (solid) and convex hull (dashed). Right: cost lines, optimal cost curve (dashed),  \rcurve (brown, thick and solid) and  \briercurve (pink, thin and solid) against cost proportions. 
%Right: cost lines, optimal cost curve (dashed),  \rcurve (brown, thick and solid) and  \briercurve (pink, thick and solid)  against skews.
}
\label{fig:example-regular-imbalanced}
%\vspace{-0.4cm}
\end{figure*}

%Figure \ref{fig:example-regular-imbalanced-with-Brier} shows exactly the same as in 
%Figure \ref{fig:example-regular-imbalanced}, but we also show the \briercurves (as defined in \cite{ICML11Brier}. They should be equal but differ slightly because of the way they are plotted and the number of segments.

Finally, we show a perfectly calibrated classifier and the \rcurves with the \briercurves
in Figure \ref{fig:example-calibrated-imbalanced-with-Brier}. The pink curve (\briercurve) for cost proportions matches the black curve (the optimal curve).% The irregularities for skews of the pink curve (\briercurve) are expected. The irregularities for cost proportions for the brown curve (\rcurve) are given by the way the curve is implemented (it does not treat ties appropriately). %It should be equal to the pink curve. 
The \rcurve shows that the rate-driven threshold choice methods sometimes makes sub-optimal choices: for example, it only switches to the second point from the left in the ROC curve when $c=4/11=0.36$, whereas the optimal decision would be to switch to this point from $c=0.25$. 

\begin{figure*}
\centering
\includegraphics[width=1\textwidth]{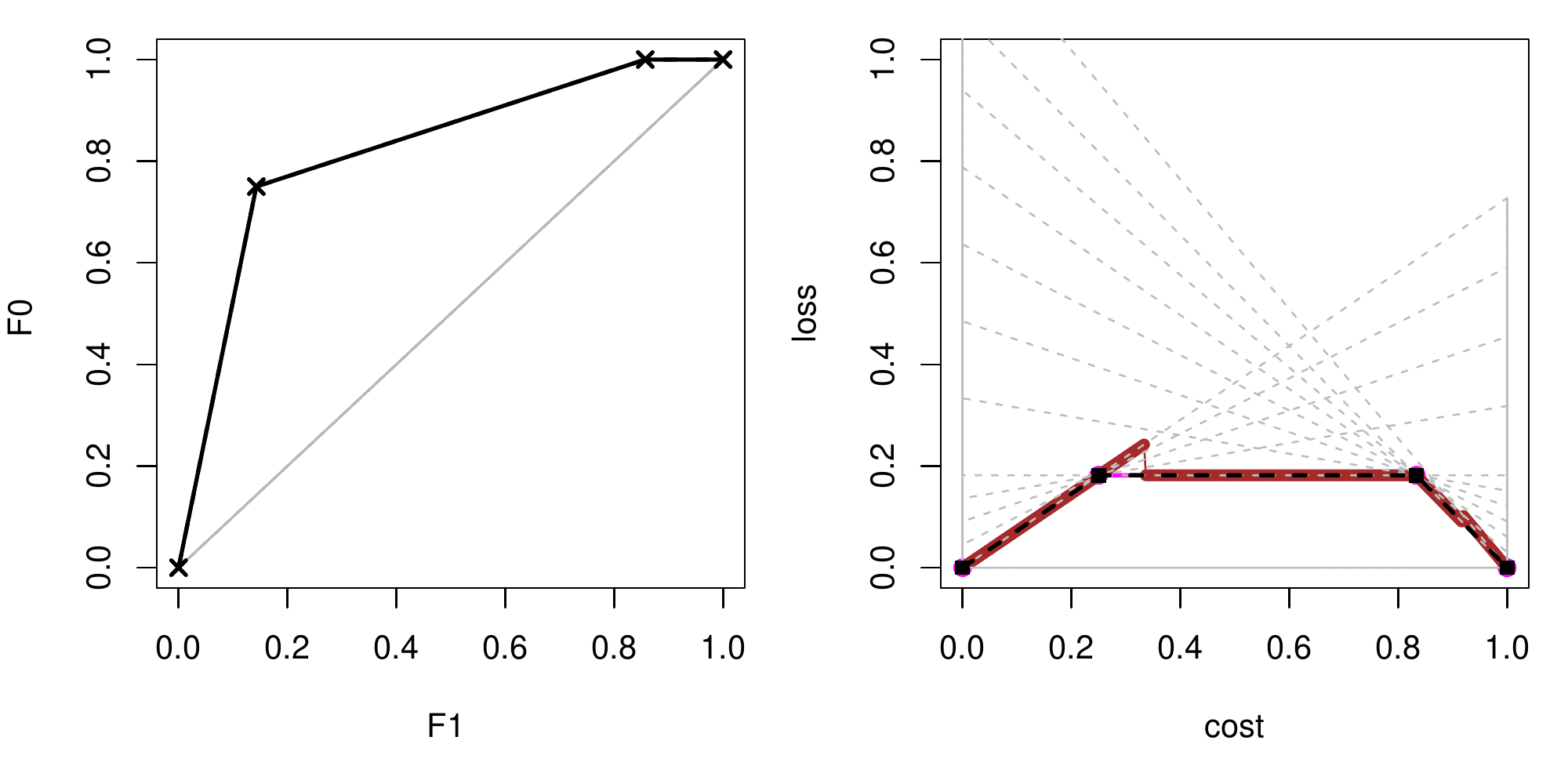}
% inp74
%\vspace{-0.4cm}
% inpbigregimb
\caption{Several graphical representations for  a perfectly calibrated classifier with scores
(1,0.833333,0.833333,0.833333,0.833333,0.833333,0.833333, 0.25, 0.25, 0.25, 0.25)
and true classes 
(1,1,1,1,1,1,0,0,0,0,1)
(4 positive examples and 7 negative examples). Left: convex ROC curve. Right: cost lines, optimal cost curve (dashed),  \rcurve (brown, thick and solid) and  \briercurve (pink, thin and solid) against cost proportions. 
%Right: cost lines, optimal cost curve (dashed),  \rcurve (brown, thick and solid) and  \briercurve (pink, thick and solid) against skews.
}
\label{fig:example-calibrated-imbalanced-with-Brier}
%\vspace{-0.4cm}
\end{figure*}

\section{Conclusions}\label{conclusion}

The definition of cost curve in the literature has been partially elusive.
While it is clear what cost lines are, it was not clear what different options we may have to draw different curves on the cost space, which of them were valid and which were not, and, more importantly, if they correspond to some curves or representations in ROC space.

In this paper, we have clarified the relation between ROC space and cost space, by finding the corresponding curves for ROC curves in cost space. 
These represent cost curves for rankers that do not commit to a fixed decision threshold. 
Cost plots have some advantages over ROC plots, and the possibility of drawing \rcurves may give further support to use cost plots and use their \rcurves there.

In addition, we have shown that when the scores of a classifier are set in an evenly-spaced way, the \rcurves correspond to the previously presented \briercurves and we have the first firm connection between the Brier Score and $\auc$. This also suggests that there might be a way to draw \briercurves in ROC space.

Given the exploratory character of this paper, there are many interesting options to follow up. 
%In particular, there are many questions about how to draw the curves for skews, and how to treat ties appropriately. We also need to understand the curves much better and learn 
Our focus will be on how to use \rcurves to choose among models and construct hybrid classifiers.

%{\small 
\bibliographystyle{plain}

\bibliography{biblio}

%}

 \end{document}